\definecolor{shadecolor}{gray}{0.9}
\DeclareRobustCommand{\parhead}[1]{\textbf{#1}~}
\newcounter{parcount}
\lstdefinestyle{mystyle}{
    commentstyle=\color{OliveGreen},
    keywordstyle=\color{BurntOrange},
    numberstyle=\tiny\color{black!60},
    stringstyle=\color{MidnightBlue},
    basicstyle=\ttfamily,
    breakatwhitespace=false,
    breaklines=true,
    captionpos=b,
    keepspaces=true,
    numbers=left,
    numbersep=5pt,
    showspaces=false,
    showstringspaces=false,
    showtabs=false,
    tabsize=2
}
\DeclareRobustCommand{\parhead}[1]{\textbf{#1}~}
\lstdefinestyle{alp_style}{
    commentstyle=\color{OliveGreen},
    numberstyle=\tiny\color{black!60},
    stringstyle=\color{BrickRed},
    basicstyle=\ttfamily\scriptsize,
    breakatwhitespace=false,
    breaklines=true,
    captionpos=b,
    keepspaces=true,
    numbers=none,
    numbersep=5pt,
    showspaces=false,
    showstringspaces=false,
    showtabs=false,
    tabsize=2
}
\newcommand{\ourmetric}{{\text{VS}}}
\newtheorem{definition}{Definition}[section]
\newtheorem{theorem}{Theorem}[section]
\newtheorem{lemma}{Lemma}[section]   
\theoremstyle{remark}
\newtheorem*{lemma*}{Lemma}
\def\eqref#1{equation~\ref{#1}}
\def\1{\bm{1}}
\def\vp{{\bm{p}}}
\def\vu{{\bm{u}}}
\def\mD{{\bm{D}}}
\def\mK{{\bm{K}}}
\def\mL{{\bm{L}}}
\def\mU{{\bm{U}}}
\def\mX{{\bm{X}}}
\def\mLambda{{\bm{\Lambda}}}
\def\mSigma{{\bm{\Sigma}}}
\DeclareMathAlphabet{\mathsfit}{\encodingdefault}{\sfdefault}{m}{sl}
\SetMathAlphabet{\mathsfit}{bold}{\encodingdefault}{\sfdefault}{bx}{n}
\def\gS{{\mathcal{S}}}
\def\gX{{\mathcal{X}}}
\def\sR{{\mathbb{R}}}
\DeclareMathOperator{\tr}{tr}
 \newacronym{ALI}{ali}{adversarially learned inference}
\newacronym{BIGAN}{bigan}{bidirectional generative adversarial network}
\newacronym{VI}{vi}{variational inference}
\newacronym{KL}{kl}{Kullback-Leibler}
\newacronym{ELBO}{elbo}{evidence lower bound}
\newacronym{MCMC}{mcmc}{Markov chain Monte Carlo}
\newacronym{HMC}{hmc}{Hamiltonian Monte Carlo}
\newacronym{RNN}{rnn}{recurrent neural network}
\newacronym{MLP}{mlp}{feed forward neural network}
\newacronym{GAN}{gan}{generative adversarial network}
\newacronym{DCGAN}{dcgan}{deep convolutional generative adversarial network}
\newacronym{PresGAN}{presgan}{prescribed generative adversarial network}
\newacronym{DGM}{dgm}{deep generative model}
\newacronym{PGAN}{pgan}{prescribed generative adversarial network}
\newacronym{VEEGAN}{veegan}{vee {GAN}}
\newacronym{PACGAN}{pacgan}{packed {GAN}}
\newacronym{STYLEGAN}{stylegan}{Style {GAN}}
\newacronym{FID}{fid}{{F}r\'{e}chet {I}nception distance}
\newacronym{IS}{is}{{I}nception score}
\newacronym{ML}{ml}{machine learning}
\newacronym{VS}{vs}{vendi score}
\newacronym{NLP}{nlp}{natural language processing}
\newacronym{IntDiv}{intdiv}{{I}nternal {D}iversity}
\newacronym{BLEU}{bleu}{BLEU}
\newacronym{PAIRWISE-BLEU}{pairwise-bleu}{PAIRWISE-BLEU}
\newacronym{D-LEX-SIM}{d-lex-sim}{D-LEX-SIM}
\newacronym{GILBO}{gilbo}{GILBO}
\newacronym{NOM}{nom}{number of modes}
\newacronym{MOSES}{moses}{MOSES}
\newacronym{HMM}{hmm}{HMM}
\newacronym{AAE}{aae}{AAE}
\newacronym{VAE}{vae}{VAE}
\newacronym{JTN}{jtn}{JTN}
\newacronym{Char-RNN}{char-rnn}{Char-RNN}
\newacronym{SMILES}{smiles}{SMILES}
\newacronym{MNIST}{mnist}{MNIST}
\newacronym{MultiNLI}{multinli}{MultiNLI}
\newacronym{StackedMNIST}{stackedmnist}{StackedMNIST}
\newacronym{NLI}{nli}{NLI}
\newacronym{VDVAE}{vdvae}{VDVAE}
\newacronym{LSUN}{lsun}{LSUN}
\newacronym{CIFAR}{cifar}{CIFAR}
\newacronym{ADM}{adm}{ADM}
\newacronym{RQ-VT}{rq-vt}{RQ-VT}
\newacronym{IDDPM}{iddpm}{IDDPM}
\newacronym{DBS}{dbs}{diverse beam search}
\newacronym{MS COCO}{ms coco}{MS COCO}
\newacronym{CelebA}{celeba}{CELEBA}
\newacronym{CIFAR-100}{cifar-100}{CIFAR-100} 
\newacronym{JSD}{jsd}{Jensen-Shannon Divergence}
\newacronym{QD}{qd}{quality diversity}
\newacronym{DPP}{dpp}{{D}eterminantal {P}oint {P}rocess}
\title{\textbf{The Vendi Score: A Diversity Evaluation Metric for Machine Learning}}
\author[1]{Dan Friedman}
\author[1, 2, *]{Adji Bousso Dieng}
\affil[1]{Department of Computer Science, Princeton University}
\affil[2]{\href{https://vertaix.princeton.edu/}{Vertaix}}
\affil[*]{\emph{Published in Transactions on Machine Learning Research (07/2023)\\Reviewed on OpenReview \url{https://openreview.net/forum?id=g97OHbQyk1}}}
\begin{document}
\maketitle

\begin{abstract}
\noindent Diversity is an important criterion for many areas of \gls{ML}, including generative modeling and dataset curation.
  However, existing metrics for measuring diversity are often domain-specific and limited in flexibility.
  In this paper, we address the diversity evaluation problem by proposing the \emph{Vendi Score}, which connects and extends ideas from ecology and quantum statistical mechanics to \gls{ML}. The Vendi Score is defined as the exponential of the Shannon entropy of the eigenvalues of a similarity matrix. This matrix is induced by a user-defined similarity function applied to the sample to be evaluated for diversity. In taking a similarity function as input, the Vendi Score enables its user to specify any desired form of diversity. Importantly, unlike many existing metrics in \gls{ML}, the Vendi Score does not require a reference dataset or distribution over samples or labels, it is therefore general and applicable to any generative model, decoding algorithm, and dataset from any domain where similarity can be defined. We showcase the Vendi Score on molecular generative modeling where we found it addresses shortcomings of the current diversity metric of choice in that domain. We also applied the Vendi Score to generative models of images and decoding algorithms of text where we found it confirms known results about diversity in those domains. Furthermore, we used the Vendi Score to measure mode collapse, a known shortcoming of \glspl{GAN}. In particular, the Vendi Score revealed that even \glspl{GAN} that capture all the modes of a labelled dataset can be less diverse than the original dataset. Finally, the interpretability of the Vendi Score allowed us to diagnose several benchmark \gls{ML} datasets for diversity, opening the door for diversity-informed data augmentation\footnote{Code for calculating the Vendi Score is available at \url{https://github.com/vertaix/Vendi-Score}.}.\\
  
\noindent \textbf{Keywords:} diversity, evaluation, entropy, ecology, quantum statistical mechanics, machine learning
\end{abstract}

\section{Introduction}
\glsresetall

\begin{figure*}[!ht]
     \centering
     \begin{subfigure}[c]{0.05\textwidth}
         \centering
         \caption{}
         \label{fig:effective_number}
     \end{subfigure}\hfill
     \begin{subfigure}[c]{0.95\textwidth}
         \centering
         \includegraphics[height=0.14\paperheight]{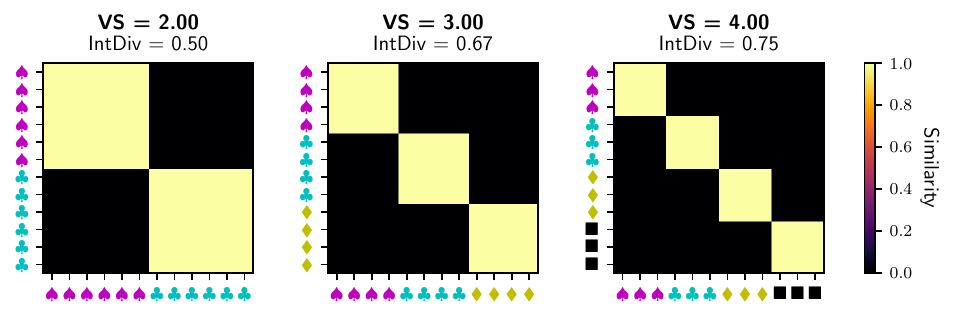}
     \end{subfigure}
     \begin{subfigure}[c]{0.05\textwidth}
         \centering
         \caption{}
         \label{fig:nonlinear}
     \end{subfigure}\hfill
     \begin{subfigure}[c]{0.95\textwidth}
         \centering
         \includegraphics[height=0.14\paperheight]{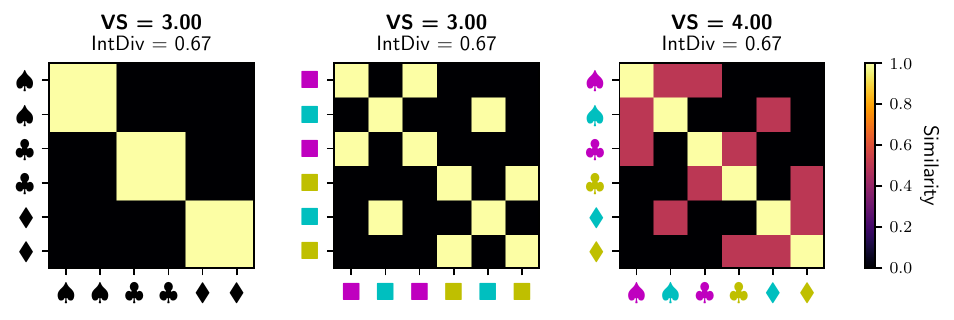}
     \end{subfigure}
     \begin{subfigure}[c]{0.05\textwidth}
         \centering
         \caption{}
         \label{fig:correlations}
     \end{subfigure}\hfill
     \begin{subfigure}[c]{0.95\textwidth}
         \centering
         \includegraphics[height=0.14\paperheight]{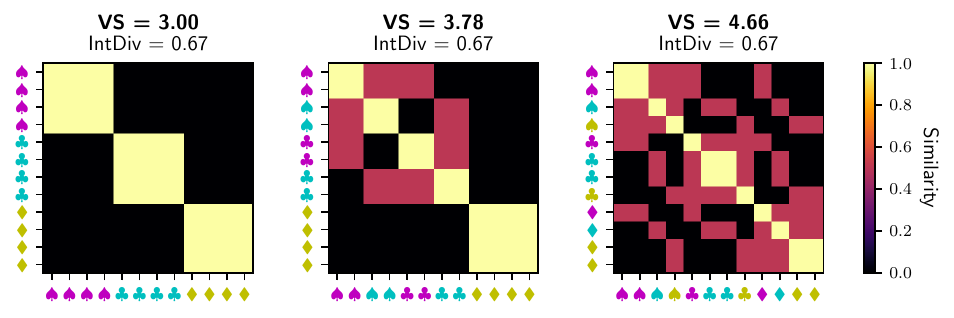}
     \end{subfigure}
     \caption{
       (a) The Vendi Score, \acrshort{VS} in the figure, can be interpreted as the effective number of unique elements in a sample. It increases linearly with the number of modes in the dataset. IntDiv, the expected dissimilarity, becomes less sensitive as the number of modes increases, converging to 1.
      (b) Combining distinct similarity functions can increase the Vendi Score, as should be expected of a diversity metric, while leaving IntDiv unchanged.
      (c) IntDiv does not take into account correlations between features, but the Vendi Score does. The Vendi Score is highest when the items in the sample differ in many attributes, and the attributes are not correlated with each other.
     }
        \label{fig:properties}
\end{figure*}
 
Diversity is a criterion that is sought after in many areas of \gls{ML}, from dataset curation 
and generative modeling to reinforcement learning, active learning, and decoding algorithms. A lack of diversity in datasets and 
models can hinder the usefulness of \gls{ML} in many critical applications, e.g. scientific discovery.   
It is therefore important to be able to measure diversity.

Many diversity metrics have been proposed in ML, but these metrics are often domain-specific and limited in flexibility.
These include metrics that define diversity in terms of a reference dataset~\citep{heusel2017gans,sajjadi2018assessing}, a pre-trained classifier~\citep{Salimans2016,Srivastava2017}, or discrete features, like n-grams~\citep{li2016diversity}.
In this paper, we propose a general, reference-free approach that defines diversity in terms of a user-specified similarity function.

Our approach is based on work in ecology, where biological diversity has been defined as the exponential of the entropy of the distribution of species within a population~\citep{hill1973diversity,jost2006entropy,leinster2021entropy}.
This value can be interpreted as the effective number of species in the population.
To adapt this approach to ML, we define the diversity of a collection of elements $x_1, \ldots, x_n$ as the exponential of the entropy of the eigenvalues of the $n \times n$ similarity matrix $\mK$, whose entries are equal to the similarity scores between each pair of elements.
This entropy can be seen as the von Neumann entropy associated with $\mK$~\citep{bengtsson2017geometry}, so we call our metric the \textit{Vendi Score}, for the von Neumann diversity.

\parhead{Contributions.} 
We summarize our contributions as follows:
\begin{itemize}
	\item We extend ecological diversity to ML, and propose the Vendi Score, a metric for evaluating diversity in ML. 
	We study the properties of the Vendi Score, which provides us with a more formal understanding of desiderata for diversity.  
	\item We showcase the flexibility and wide applicability of the Vendi Score, characteristics that stem from its sole reliance on the sample to be evaluated for diversity and a user-defined similarity function, and highlight the shortcomings of existing metrics used to measure diversity in different domains. 
\end{itemize}

\section{Are We Measuring Diversity Correctly in ML?}

Several existing metrics for diversity rely on a reference distribution or dataset. 
These reference-based metrics define diversity in terms of coverage of the reference.
They assume access to an embedding function--such as a pretrained Inception model~\citep{szegedy2016rethinking}--that maps samples to real-valued vectors.
One example of a reference-based metric is \gls{FID}~\citep{heusel2017gans}, which measures the Wasserstein-2 distance between two Gaussian distributions, one Gaussian fit to the embeddings of the reference sample and another one fit to the embeddings of the sample to be evaluated for diversity.
\gls{FID} was originally proposed for evaluating image \glspl{GAN} but has since been applied to text~\citep{cifka2018eval} and molecules~\citep{preuer2018frechet} using domain-specific neural network encoders.
~\citet{sajjadi2018assessing} proposed a two-metric evaluation paradigm using precision and recall, with precision measuring quality and recall measuring diversity in terms of coverage of the reference distribution. Several other variations of precision and recall have been proposed~\citep{kynkaanniemi2019improved,simon2019revisiting,naeem2020reliable}.  
Compared to these approaches, the Vendi Score is a reference-free metric, measuring the intrinsic diversity of a set rather than the relationship to a reference distribution. This means that the Vendi Score should be used along side a quality metric, but can be applied in settings where there is no reference distribution.

Some other existing metrics evaluate diversity using a pre-trained classifier, therefore requiring labeled datasets.
For example, the \gls{IS}~\citep{Salimans2016}, which is mainly used to evaluate the perceptual quality of image generative models, 
evaluates diversity using the entropy of the marginal distribution of class labels predicted by an ImageNet classifier.
Another example is \gls{NOM}~\citep{Srivastava2017}, a metric used to evaluate the diversity of \glspl{GAN}. 
\gls{NOM} is calculated by using a classifier trained on a labeled dataset and then counting the number of unique labels predicted by the classifier when 
using samples from a \gls{GAN} as input. 
Both \gls{IS} and \gls{NOM} define diversity in terms of predefined labels, and therefore require knowledge of the ground truth labels and a separate classifier. 

In some discrete domains, diversity is often evaluated in terms of the distribution of unique features. 
For example in \gls{NLP}, a standard metric is n-gram diversity, which is defined as the number of distinct n-grams divided by the total number of n-grams~\citep[e.g.][]{li2016diversity}.
These metrics require an explicit, discrete feature representation.

There are proposed metrics that use similarity scores to define diversity. The most widely used metric of this form is the average pairwise similarity score or the complement, the average dissimilarity.
In text, variants of this metric include \acrshort{PAIRWISE-BLEU}~\citep{shen2019mixture} and \acrshort{D-LEX-SIM}~\citep{fomicheva2020unsupervised}, in which the similarity function is an n-gram overlap metric such as \acrshort{BLEU}~\citep{papineni2002bleu}.
In biology, average dissimilarity is known as IntDiv~\citep{benhenda2017chemgan}, with similarity defined as the Jaccard (Tanimoto) similarity between molecular fingerprints.
Average similarity has some shortcomings, which we highlight in \ref{fig:properties}. 
The figure shows the similarity matrices induced by a shape similarity function and/or a color similarity function. Each of the similarity functions is $1$ when the index of the column and the index of the row have the same shape or color and $0$ otherwise. As shown in \ref{fig:properties}, the average similarity--here measured by IntDiv--becomes less sensitive as diversity increases and does not account for correlations between features. This is not the case for the Vendi Score, which accounts for correlations between features and is able to capture the increased diversity resulting from composing distinct similarity functions. 
Related to the metric we propose here is a similarity-sensitive diversity metric proposed in ecology by~\citet{leinster2012measuring}, and which was introduced in the context of \gls{ML} by~\citet{posada2020gait}.
This metric is based on a notion of entropy defined in terms of a \textit{similarity profile}, a vector whose entries are equal to the expected similarity scores of each element.
Like IntDiv, it does not account for correlations between features.

Some other diversity metrics in the \gls{ML} literature fall outside of these categories.
The Birthday Paradox Test~\citep{Arora2018} aims to estimate the size of the support of a generative model, but requires some manual inspection of samples.
\acrshort{GILBO}~\citep{alemi2018gilbo} is a reference-free metric but is only applicable to latent variable generative models.
~\citet{kviman2022multiple} measure the diversity of ensembles of variational approximations using the \gls{JSD}; this metric is only applicable to sets of probability distributions.
~\citet{mitchell2020diversity} introduce metrics for diversity and inclusion, defining diversity in terms of the representation of socially relevant attributes like gender and race, and using the term \textit{heterogeneity} to refer to variety in arbitrary attributes; in this paper, we use the term diversity to have the same sense as heterogeneity, meaning variety in arbitrary (user-specified) attributes.
In the context of drug exploration,~\citet{xie2022much} propose a metric based on the size of the largest subset of elements such that the similarity between any pair of elements is below some threshold, but this metric requires setting a threshold.
Similarly, in the field of evolutionary computation, \gls{QD} algorithms~\citep{pugh2015confronting}, have assessed diversity by discretizing the feature space into grid of bins and counting the number of covered bins, but this approach requires picking a bin size.

As discussed above, several attempts have been made to measure diversity in \gls{ML}. However, the proposed metrics can be limited in their applicability in that they require a reference dataset or predefined labels, or are domain-specific and applicable to one class of models. The existing metrics that do not have those applicability limitations have shortcomings when it comes to capturing diversity that we have illustrated in \ref{fig:properties}.  

\section{Measuring Diversity with the Vendi Score}
\label{sec:method}

We now define the Vendi Score, state its properties, and study its computational complexity. (We relegate all proofs of lemmas and theorems to the appendix.)

\subsection{Defining the Vendi Score}

To define a diversity metric in \gls{ML} we look to ecology, the field that centers diversity in its work.
In ecology, one main way diversity is defined is as the exponential of the entropy of the distribution of the species under study~\citep{jost2006entropy,leinster2021entropy}. This is a reasonable index for diversity. Consider a population with a uniform distribution over $n$ species, with entropy $\log(n)$. This population has maximal ecological diversity $n$, the same diversity as a population with $n$ members, each belonging to a different species. The ecological diversity decreases as the distribution over the species becomes less uniform, and is minimized and equal to one when all members of the population belong to the same species. 
For a more extensive mathematical discussion of entropy and diversity in the context of biodiversity, we refer readers to~\citet{leinster2021entropy}.
 
How can we extend this way of thinking about diversity to \gls{ML}? One naive approach is to define diversity as the exponential of the Shannon entropy of the probability distribution defined by a machine learning model or dataset. However, this approach is limiting in that it requires a probability distribution for which entropy is tractable, which is not possible in many \gls{ML} settings. We would like to define a diversity metric that only relies on the samples being evaluated for diversity. And we would like for such a metric to achieve its maximum value when all samples are dissimilar and its minimum value when all samples are the same. This implies the need to define a similarity function over the samples. Endowed with such a similarity function, we can define a form of entropy that only relies on the samples to be evaluated for diversity. This leads us to the Vendi Score: 

\begin{definition}[Vendi Score]\label{def:vendi_score}
  Let $x_1, \ldots, x_n \in \gX$ denote a collection of samples, let $k: \gX \times \gX \to \sR$ be a positive semidefinite similarity function, with $k(x, x) = 1$ for all $x$, and let $\mK \in \sR^{n \times n}$ denote the kernel matrix with entry $K_{i, j} = k(x_i, x_j)$. Denote by $\lambda_1, \ldots, \lambda_n$ the eigenvalues of $\mK/n$. The Vendi Score (${\ourmetric}$) is defined as the exponential of the Shannon entropy of the eigenvalues of $\mK/n$:
  \begin{align}\label{eq:vendi-score}
    {\ourmetric}_k(x_1, \ldots, x_n) =  \exp\left( - \sum_{i=1}^n \lambda_i \log \lambda_i \right), 
  \end{align}
where we use the convention $0 \log 0 = 0$.
\end{definition}
To understand the validity of the Vendi Score as a mathematical object, note that the eigenvalues of $\mK/n$ are nonnegative (because $k$ is positive semidefinite) and sum to one (because the diagonal entries of $\mK/n$ are equal to $1/n$). The Shannon entropy is therefore well-defined and the Vendi Score is well-defined.
In this form, the Vendi Score can also be seen as the \emph{effective rank} of the kernel matrix $\mK$. Effective rank was introduced by~\citet{roy2007effective} in the context of signal processing; the effective rank of a matrix is defined as the exponential of the entropy of the normalized singular values.
Effective rank has also been used in machine learning, for example, to evaluate word embeddings~\citep{torregrossa2020correlation} and to study the implicit bias of gradient descent for low-rank solutions~\citep{arora2019implicit}.

The Vendi Score can be expressed directly as a function of the kernel similarity matrix $\mK$:
\begin{lemma}\label{lemma:von_neumann}
Consider the same setting as Definition~\ref{def:vendi_score}. Then 
  \begin{align}
    {\ourmetric}_k(x_1, \ldots, x_n) =  \exp \left (-\tr \left(\frac{\mK}{n} \log \frac{\mK}{n} \right) \right )
    .
  \end{align}
\end{lemma}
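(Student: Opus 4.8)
The plan is to use the spectral theorem to reduce the matrix-valued expression $\tr\!\big((\mK/n)\log(\mK/n)\big)$ to the eigenvalue sum $\sum_i \lambda_i \log \lambda_i$ appearing in \eqref{eq:vendi-score}; the claimed identity then follows immediately by negating and exponentiating.

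First I would record that $\mK/n$ is symmetric and positive semidefinite (it is a scaled kernel matrix), so by the spectral theorem $\mK/n = \mU \mLambda \mU^\top$ for some orthogonal $\mU$ and $\mLambda = \diag{\lambda_1, \ldots, \lambda_n}$, where by \Cref{lemma:eigen} the $\lambda_i$ are nonnegative and sum to one. Second, I would make precise what $(\mK/n)\log(\mK/n)$ means when $\mK/n$ is singular: let $g(t) = t\log t$ for $t>0$ and $g(0) = 0$, a function that is continuous on $[0,\infty)$ and consistent with the convention $0\log 0 = 0$ from \Cref{def:vendi_score}; then $(\mK/n)\log(\mK/n)$ is read as the matrix $g(\mK/n) := \mU\, \diag{g(\lambda_1), \ldots, g(\lambda_n)}\, \mU^\top$ obtained by functional calculus. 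Third, using the cyclic property of the trace together with $\mU^\top \mU = \mI$,
\begin{align}
  \tr\!\left(\frac{\mK}{n}\log\frac{\mK}{n}\right) = \tr\!\Big(\mU\, \diag{g(\lambda_1), \ldots, g(\lambda_n)}\, \mU^\top\Big) = \sum_{i=1}^n g(\lambda_i) = \sum_{i=1}^n \lambda_i\log\lambda_i .
\end{align}
Taking $\exp$ of the negation of both sides yields $\exp\!\big(-\tr((\mK/n)\log(\mK/n))\big) = \exp\!\big(-\sum_{i=1}^n \lambda_i\log\lambda_i\big)$, and the right-hand side is $\ourmetric_k(x_1,\ldots,x_n)$ by \Cref{def:vendi_score}, the sum over all $n$ eigenvalues agreeing with the sum over the nonzero ones since $0\log 0 = 0$.

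The only point that needs care — and the step I would flag as the main obstacle — is the interpretation of the matrix function: $\log(\mK/n)$ on its own is undefined once $\mK/n$ has a zero eigenvalue, so the product $(\mK/n)\log(\mK/n)$ must be understood as the single function-of-a-matrix $g(\mK/n)$ with $g$ continuously extended by $g(0)=0$. Once this is granted, continuity of $g$ guarantees $g(\mK/n)$ is independent of the choice of eigenbasis (which matters only when there are repeated eigenvalues), and the remainder is a one-line consequence of the spectral theorem and invariance of the trace under conjugation. I would also note in passing that this identity says the Vendi Score equals the exponential of the von Neumann entropy of the density matrix $\mK/n$, which is the bridge to the quantum-statistical-mechanics viewpoint advertised in the introduction.
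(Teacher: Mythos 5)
Your proposal is correct and follows essentially the same route as the paper's proof: diagonalize $\mK/n$ via the spectral theorem, use similarity-invariance of the trace to reduce $\tr\left(\frac{\mK}{n}\log\frac{\mK}{n}\right)$ to $\sum_i \lambda_i \log \lambda_i$, then negate and exponentiate. Your explicit treatment of the singular case---reading the product as the single matrix function $g(\mK/n)$ with $g(t)=t\log t$, $g(0)=0$---is a point the paper's proof glosses over (it writes $\log\mLambda$ directly, which is undefined when an eigenvalue vanishes), so your version is slightly more careful but not a different argument.
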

The lemma makes explicit the connection of the Vendi Score to quantum statistical mechanics: the Vendi Score is equal to the exponential of the von Neumann entropy associated with $\mK/n$~\citep{bengtsson2017geometry, bach2022information}.
In quantum statistical mechanics, the state of a quantum system is described by a \emph{density matrix}, often denoted $\rho$.
The von Neumann entropy of $\rho$ quantifies the uncertainty in the state of the system~\citep{wilde2013quantum}. 
The normalized similarity matrix $\mK/n$ here plays the role of the density matrix. 

Our formulation of the Vendi Score assumes that $x_1, \ldots, x_n$ were sampled independently, and so $p(x_i) \approx \frac{1}{n}$ for all $i$. This is the usual setting in ML and the setting we study in our experiments.
However, we can generalize the Vendi Score to a setting in which we have an explicit probability distribution over the sample space $\gX$ (see Definition~\ref{def:weighted_vendi_score} in the appendix).

\subsection{Understanding the Vendi Score}
Figure~\ref{fig:properties} illustrates the behavior of the Vendi Score on simple toy datasets in which each element is defined by a shape and a color, and similarity is 
defined to be $1$ if elements share both shape and color, $0.5$ if they share either shape or color, and $0$ otherwise.

First, Figure~\ref{fig:effective_number} illustrates that the Vendi Score is an \textit{effective number}, and can be understood as the effective number of dissimilar elements in a sample.
The value of measuring diversity with effective numbers has been argued in ecology~\citep[e.g.][]{hill1973diversity,patil1982diversity,jost2006entropy} and economics~\citep{adelman1969comment}.
Effective numbers provide a consistent basis for interpreting diversity scores, and make it possible to compare diversity scores using ratios and percentages.
For example, in Figure~\ref{fig:effective_number}, when the number of modes doubles from two to four, the Vendi Score doubles as well. 
If we doubled the number of modes from four to eight, the Vendi Score would double once again. 

Figures~\ref{fig:nonlinear} and~\ref{fig:correlations} illustrate another strength of the Vendi Score, which is that it accounts for correlations between features. 
Given distinct similarity functions $k$ and $k'$, the Vendi Score calculated using the combined similarity function $\frac{1}{2}k(x) + \frac{1}{2}k'(x)$ can be 
greater than the average of the individual Vendi Scores if the two similarity functions describe distinct dimensions of variation. 
Furthermore, the Vendi Score increases when the items in the sample differ in more attributes, and the attributes become less correlated with each other.

The Vendi Score has several desirable properties as a diversity metric. We summarize them in the following theorem.
\begin{theorem}[Properties of the Vendi Score]\label{thm:properties}
  Consider the same definitions in \ref{def:vendi_score} and \ref{def:weighted_vendi_score}.
  \begin{enumerate}
    \item \textbf{Effective number.} If $k(x_i, x_j) = 0$ for all $i \neq j$, then ${\ourmetric}_k(x_1, \ldots, x_n)$ is maximized and equal to $n$. If $k(x_i, x_j) = 1$ for all $i, j$, then ${\ourmetric}_k(x_1, \ldots, x_n)$ is minimized and equal to $1$.
    \item \textbf{Identical elements.} Suppose $k(x_i, x_j) = 1$ for some $i \neq j$.
      Let $\vp'$ denote the probability distribution created by combining $i$ and $j$, i.e. $p'_i = p_i + p_j$ and $p'_j = 0$. Then the Vendi Score is unchanged,
      \begin{itemize}
      	\item[] \centering ${\ourmetric}_k(x_1, \ldots, x_n, \vp) = {\ourmetric}_{k}(x_1, \ldots, x_n, \vp')$.
      \end{itemize}
      \item \textbf{Partitioning.} Suppose $S_1, \ldots, S_m$ are collections of samples such that, for any $i \neq j$, for all $x \in S_i, x' \in S_j$, $k(x, x') = 0$.
      Then the diversity of the combined samples depends only on the diversities of $S_1, \ldots, S_m$ and their relative sizes.
      In particular, if $p_i = |S_i|/\sum_{j}|S_j|$ is the relative size of $S_i$ and $H(p_1, \ldots, p_m)$ denotes the Shannon entropy, then the Vendi Score is the geometric mean,
      \begin{itemize}
      	\item[] \centering ${\ourmetric}_k(S_1, \ldots, S_m) = \exp(H(p_1, \ldots, p_m))\prod_{i=1}^m {\ourmetric}_k(S_i)^{p_i}.$
      \end{itemize}
    \item \textbf{Symmetry.} If $\pi_1, \ldots, \pi_n$ is a permutation of $1, \ldots, n$, then 
     \begin{itemize}
      	\item[] \centering ${\ourmetric}_k(x_1, \ldots, x_n) = {\ourmetric}_k(x_{\pi_1}, \ldots, x_{\pi_n})$.
      \end{itemize}
  \end{enumerate}
\end{theorem}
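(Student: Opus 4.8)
The plan is to verify the four properties more or less separately, using as a common tool the Gram (feature-map) representation of the kernel matrix. Since $\mK$ is positive semidefinite, write $\mK = \Phi^\top\Phi$ with the columns of $\Phi$ being vectors $\phi(x_1),\dots,\phi(x_n)$ (for instance $\Phi = \mathbf\Lambda^{1/2}\mathbf V^\top$ from a spectral decomposition $\mK = \mathbf V\mathbf\Lambda\mathbf V^\top$). Because $AB$ and $BA$ share the same nonzero eigenvalues, the probability-weighted matrix $\tilde{\mK}_{\vp} = \mathrm{diag}(\sqrt{\vp})\,\mK\,\mathrm{diag}(\sqrt{\vp})$ has the same nonzero spectrum as $C_{\vp} := \sum_{a=1}^n p_a\,\phi(x_a)\phi(x_a)^\top$ (and likewise the uniform case $\mK/n$ corresponds to $p_a = 1/n$). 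Since zero eigenvalues contribute $0\log 0 = 0$, the Vendi Score depends only on this shared nonzero spectrum. Throughout I also use \Cref{lemma:eigen}, which guarantees the eigenvalue vector lies in $\Delta_n$, so the Shannon entropy is well defined, nonnegative, and at most $\log n$.

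For the effective-number property: if $k(x_i,x_j)=0$ for $i\neq j$ then $\mK$ is the identity (unit diagonal), so $\mK/n$ has all eigenvalues $1/n$, the entropy is $\log n$, and $\ourmetric_k = n$; maximality then follows because the entropy of any point of $\Delta_n$ is at most $\log n$. If instead $k(x_i,x_j)=1$ for all $i,j$ then $\mK$ is the all-ones matrix, whose normalized spectrum is $(1,0,\dots,0)$, the entropy is $0$, and $\ourmetric_k = 1$; minimality follows from nonnegativity of the entropy. Symmetry is immediate: permuting the samples conjugates $\mK$ by a permutation matrix, which preserves the spectrum and hence $\ourmetric_k$.

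For the identical-elements property, suppose $k(x_i,x_j)=1$ with $i\neq j$. Then $\|\phi(x_i)-\phi(x_j)\|^2 = K_{ii} - 2K_{ij} + K_{jj} = 0$, so $\phi(x_i) = \phi(x_j)$. Hence in $C_{\vp}$ the two rank-one terms $p_i\,\phi(x_i)\phi(x_i)^\top$ and $p_j\,\phi(x_j)\phi(x_j)^\top$ combine into the single term $(p_i+p_j)\,\phi(x_i)\phi(x_i)^\top$, which is exactly the contribution of index $i$ in $C_{\vp'}$; every remaining term is identical, so $C_{\vp} = C_{\vp'}$. Therefore $\tilde{\mK}_{\vp}$ and $\tilde{\mK}_{\vp'}$ have the same nonzero eigenvalues, the same eigenvalue entropy, and the same Vendi Score.

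For the partitioning property, the hypothesis makes $\mK$ block diagonal (after reordering) with blocks $\mK^{(1)},\dots,\mK^{(m)}$, where $\mK^{(l)}$ is the kernel matrix of $S_l$ and has size $n_l = |S_l|$. With $n = \sum_l n_l$ and $p_l = n_l/n$, the eigenvalues of $\mK/n$ are precisely the numbers $p_l\mu^{(l)}_r$, where $\mu^{(l)}_1,\dots,\mu^{(l)}_{n_l}$ are the eigenvalues of $\mK^{(l)}/n_l$; by \Cref{lemma:eigen} these satisfy $\sum_r \mu^{(l)}_r = 1$. Splitting $\log(p_l\mu^{(l)}_r) = \log p_l + \log\mu^{(l)}_r$ in the entropy sum and using $\sum_r \mu^{(l)}_r = 1$ gives eigenvalue entropy $H(p_1,\dots,p_m) + \sum_{l} p_l\,H(\mu^{(l)})$; exponentiating yields $\ourmetric_k(S_1,\dots,S_m) = \exp(H(p_1,\dots,p_m))\prod_{l=1}^m \ourmetric_k(S_l)^{p_l}$. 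The only step with genuine content is the identical-elements claim — that $k(x_i,x_j)=1$ forces equal feature maps — after which each property reduces to a short spectral computation; in particular the partitioning identity is just a rearrangement of a logarithm, which I expect to be routine rather than the main obstacle.
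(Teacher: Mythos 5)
Your proposal is correct and follows essentially the same route as the paper's proof: identity versus all-ones kernel for the effective-number property, passing to the feature-space covariance $\sum_a p_a\,\phi(x_a)\phi(x_a)^\top$ for identical elements, block-diagonal structure plus the entropy chain rule for partitioning, and permutation invariance of the spectrum for symmetry. Your treatment is slightly more careful in one spot — you derive $\phi(x_i)=\phi(x_j)$ from $k(x_i,x_j)=1$ via $\lVert\phi(x_i)-\phi(x_j)\rVert^2 = K_{ii}-2K_{ij}+K_{jj}=0$, whereas the paper simply assumes the elements are identical — but this is a refinement of the same argument rather than a different approach.
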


The effective number property provides a consistent frame of reference for interpreting the Vendi Score: a sample with a Vendi Score of $m$ can be understood to be as diverse as a sample consisting of $m$ completely dissimilar elements.
The identical elements property provides some justification for our use of a sampling approximation: for example, calculating the empirical Vendi Score of a sample of $90$ blue diamonds and $10$ yellow squares is equivalent to calculating the probability-weighted Vendi Score of a sample of one blue spade and one yellow square, with $\vp = (0.9, 0.1)$.
The partitioning property is analogous to the partitioning property of the Shannon entropy and means that if two samples are completely dissimilar we can calculate the diversity of the union of the samples using only the diversity of each sample independently and their relative sizes.
The symmetry property means that the Vendi Score will be the same regardless of how we order the rows and columns in the similarity matrix.

\subsection{Calculating the Vendi Score}
Calculating the Vendi Score for a sample of $n$ elements requires finding the eigenvalues of an $n \times n$ matrix, which has a time complexity of $O(n^3)$.
However, when embeddings of the observations (or feature vectors) are available, which is the case in many \gls{ML} settings and in many of the applications we consider in this paper, one can use similarity functions defined as inner products between the embeddings $\phi(x) \in \sR^d$, with $d \ll n$. That is, we can use the similarity matrix $\mK = \mX^{\top}\mX$, where $\mX \in \sR^{n \times d}$ is the embedding/feature matrix with row $\mX_{i, :} = \phi(x_i)$.
The eigenvalues of $\mK/n$ are the same as the eigenvalues of the covariance matrix $\mX\mX^{\top}/n$, therefore we can calculate the Vendi Score exactly in a time of $O(d^2n + d^3) = O(d^2n)$.
This is the same complexity as existing metrics such as \gls{FID}~\citep{heusel2017gans}, which require calculating the covariance matrix of Inception embeddings.

When embeddings aren't available, the Vendi Score can be approximated using column sampling methods~\citep[i.e. the Nystr\"{o}m method; ][]{williams2000using}.

\paragraph{Sample complexity.}
The Vendi Score is the exponential of the kernel entropy, $H(\mK) = -\tr \left(\frac{\mK}{n} \log \frac{\mK}{n}\right)$.
~\citet{bach2022information} proves that empirical estimator of the kernel entropy has a convergence rate proportional to $1/\sqrt{n}$, where $n$ is the number of samples (Appendix~\ref{app:sample_complexity}).

\subsection{Connections to Other Areas in ML}

Here we remark on the connections between the Vendi Score and other commonly studied objects in \gls{ML} that make use of the eigenvalues of a similarity matrix.

\parhead{Determinantal Point Processes.}
The Vendi Score bears a relationship to \glspl{DPP}, which have been used in machine learning for diverse subset selection~\citep{kulesza2012determinantal}.
A \gls{DPP} is a probability distribution over subsets of a ground set $\mathcal{X}$ parameterized by a positive semidefinite kernel matrix $\mK$.
The likelihood of drawing any subset $X \subseteq \mathcal{X}$ is defined as proportional to $|\mK_{X}|$, the determinant of the similarity matrix restricted to elements in $X$: $p(X) \propto |\mK_{X}| = \prod_i \lambda_i,$
where $\lambda_i$ are the eigenvalues of $\mK_{X}$.
The likelihood function has a geometric interpretation, as the square of the volume spanned by the elements of $X$ in an implicit feature space. 
However, the \gls{DPP} likelihood is not commonly used for evaluating diversity, and has some limitations.
For example, it is always equal to 0 if the sample contains any duplicates, and the geometric meaning is arguably less straightforward to interpret than the Vendi Score, which can be understood in terms of the effective number of dissimilar elements.

\parhead{Spectral Clustering.}
The eigenvalues of the similarity matrix are also related to spectral clustering algorithms~\citep{von2007tutorial}, which use a matrix known as the graph Laplacian, defined $\mL = \mD - \mK$, where $\mK$ is a symmetric, weighted adjacency matrix with non-negative entries, and $\mD$ is a diagonal matrix with $D_{i, i} = \sum_{j} K_{i, j}$.
The eigenvalues of $\mL$ can be used to characterize different properties of the graph---for example, the multiplicity of the eigenvalue 0 is equal to the number of connected components.
As a metric for diversity, the Vendi Score is somewhat more general than the number of connected components: it provides a meaningful measure even for fully connected graphs, and captures within-component diversity.

\section{Experiments}
\label{sec:empirical}

We illustrate the Vendi Score, which we now denote by \acrshort{VS} for the rest of this section,  on synthetic data to illustrate that it captures intuitive notions of diversity, and then apply it to a variety of setting in \gls{ML}.
We used \acrshort{VS} to evaluate the diversity of generative models of molecules, an application where diversity plays an important role in enabling discovery.  
We compare \acrshort{VS} to IntDiv, a function of the average similarity:
\begin{align*}
  \mathrm{IntDiv}(x_1, \ldots, x_n) = 1 - \frac{1}{n^2} \sum_{i, j} k(x_i, x_j).
\end{align*}
We found that \acrshort{VS} identifies some model weaknesses that are not detected by IntDiv.
We also applied \acrshort{VS} to generative models of images, and decoding algorithms of text, where we found it confirms what we know about diversity in those applications. 
We also used \acrshort{VS} to measure mode collapse in \glspl{GAN} and datasets and show that it reveals finer-grained distinctions in diversity than current metrics for measuring mode collapse.
Finally, we used \acrshort{VS} to analyze the diversity of several image, text, and molecule datasets, gaining insights into the diversity profile of those datasets. 
(Implementation details are provided in Appendix~\ref{app:implementation_details}.)

\begin{minipage}{\linewidth}
\centering
\resizebox{\linewidth}{!}{\includegraphics{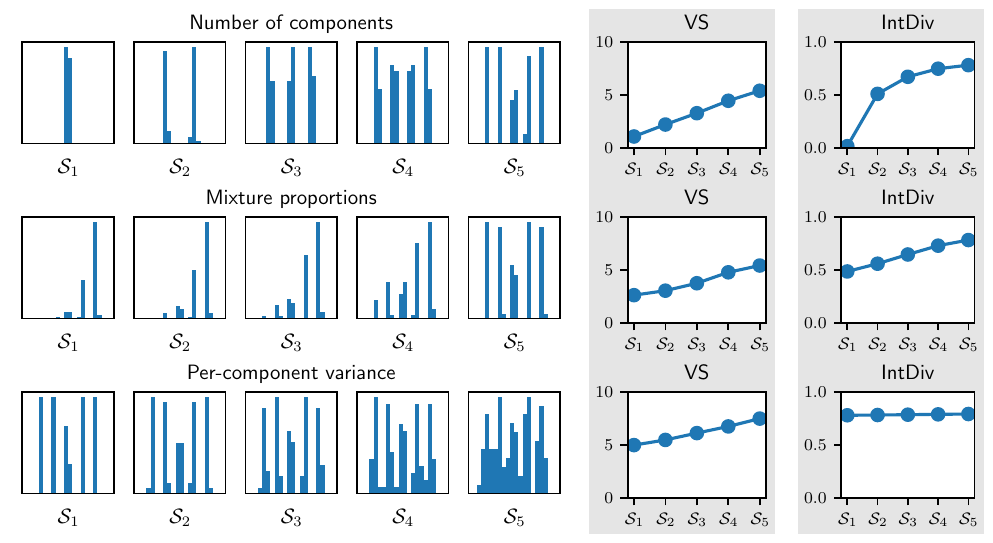}
}
\captionof{figure}{
\label{fig:synthetic_example}
${\ourmetric}$ increases proportionally with diversity in three sets of synthetic datasets. 
In each row, we sample datasets from univariate mixture-of-normal distributions,
varying either the number of components, the mixture proportions, or the per-component variance. 
The datasets are depicted in the left, as histograms, and the diversity scores are plotted on the right.
}
\end{minipage}

\subsection{Synthetic experiments}
\label{sec:synthetic}

To illustrate the behavior of the Vendi Score, we calculate the diversity of simple datasets drawn from a mixture of univariate normal distributions, varying either the number of components, the mixture proportions, or the per-component variance.
We measure similarity using the RBF kernel: $k(x, x') = \mathrm{exp}(\lVert x - x '\rVert^2 / 2\sigma^2)$.
The results are illustrated in Figure~\ref{fig:synthetic_example}.
${\ourmetric}$ behaves consistently and intuitively in all three settings:
in each case, ${\ourmetric}$ can be interpreted as the effective number of modes, ranging between one and five in the first two rows and increasing from five to seven in the third row as we increase within-mode variance.
On the other hand, the behavior of IntDiv is different in each settings:
for example, IntDiv is relatively insensitive to within-mode variance, and additional modes bring diminishing returns.

In Appendix~\ref{app:mode_dropping_in_datasets}, we also validate that \acrshort{VS} captures mode dropping in a simulated setting, using image and text classification datasets, where we have information about the ground truth class distribution.
In both cases, \acrshort{VS} has a stronger correlation with the true number of modes compared to IntDiv.

\subsection{Evaluating molecular generative models for diversity}
\label{sec:molecule_models}
\begin{figure}[t]
  \centering
  \includegraphics[width=\textwidth]{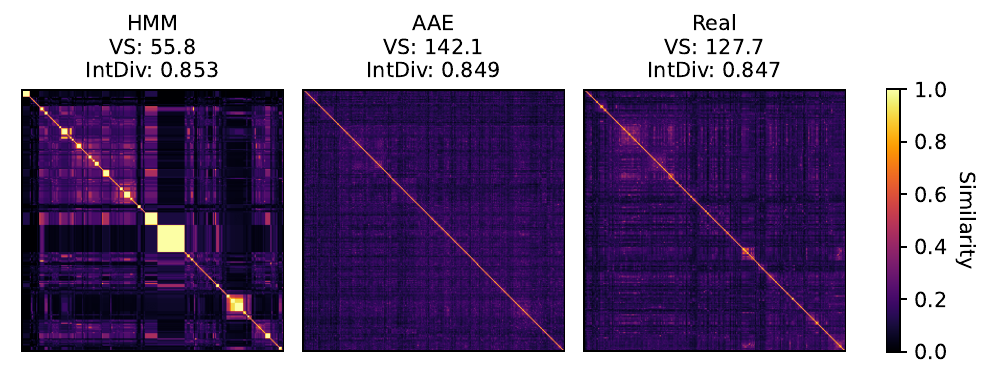}
  \caption{The kernel matrices for $250$ molecules sampled 
    from the \acrshort{HMM}, \acrshort{AAE}, and the original dataset, sorted
    lexicographically by \acrshort{SMILES} string representation.
    The samples have similar IntDiv scores,
    but the \acrshort{HMM} samples score much lower on \acrshort{VS}.
    The figure shows that the \acrshort{HMM} generates a number of exact duplicates. 
    \acrshort{VS} is able to capture the \acrshort{HMM}'s lack of diversity while IntDiv cannot.}
\label{fig:molecule_kernels}
\end{figure}

Next, we evaluate the diversity of samples from generative models of molecules. For generative models to be 
useful for the discovery of novel molecules, they ought to be diverse. The standard diversity metric in this setting is IntDiv.
We evaluate samples from generative models provided in the \acrshort{MOSES} benchmark~\citep{polykovskiy2020moses}, using the first $2500$ valid molecules in each sample.
Following prior work, our similarity function is the Morgan fingerprint similarity (radius 2), implemented in RDKit.\footnote{RDKit: Open-source Cheminformatics. https://www.rdkit.org.}
In Figure~\ref{fig:molecule_kernels}, we highlight an instance where \acrshort{VS} and IntDiv disagree: IntDiv ranks the \acrshort{HMM} among the most diverse models, while \acrshort{VS} ranks it as the least diverse (the complete results are in Appendix Table~\ref{tab:molecule_metrics}).
The \acrshort{HMM} has a high IntDiv score because, on average, the \acrshort{HMM} molecules have low pairwise similarity scores,
but there are a number of clusters of identical or nearly identical molecules.

\subsection{Assessing mode collapse in GANs}
Mode collapse is a failure mode of \glspl{GAN} that has received a lot of attention from the \gls{ML} community~\citep{Metz2017, dieng2019prescribed}. 
The main metric for measuring mode collapse, called \acrlong{NOM}(\gls{NOM}), can only be used to assess mode collapse for \glspl{GAN} trained on a labelled dataset. 
\gls{NOM} is computed by training a classifier on the labeled training data and counting the number of unique classes that are predicted by the trained classifier for the generated samples. 
In Table~\ref{tab:mode_dropping}, we evaluate two models that were trained on the Stacked\acrshort{MNIST} dataset, a standard setting for evaluating mode collapse in \glspl{GAN}.
Stacked\acrshort{MNIST} is created by stacking three \acrshort{MNIST} images along the color channel, creating $1000$ classes corresponding to $1000$ number of modes.

\begin{table*}[ht]
  \small
\centering
\begin{tabular}{l c c c c}
  \toprule
Model & \acrshort{NOM} & Mode Div. & \acrshort{VS}\\
\midrule
Self-cond. \gls{GAN} & 1000 & 921.0 & 746.7 \\
Pres\gls{GAN} & 1000 & 948.7 & 866.6 \\
\midrule
Original & 1000 & 950.8 & 943.7 \\
\bottomrule
\end{tabular}
\caption{
\label{tab:mode_dropping}
\acrshort{VS} captures a more fine-grained notion of diversity than \acrlong{NOM}(\acrshort{NOM}). 
Although Pres\gls{GAN} and Self-cond.\gls{GAN} both capture all the $1000$ modes, \acrshort{VS} reveals that Pres\gls{GAN} is more diverse than Self-cond.\gls{GAN} 
and that they both are less diverse than the original dataset. 
}
\end{table*}

In prior work, mode collapse is evaluated by training an \acrshort{MNIST} classifier
and counting the number of unique classes that are predicted for the generated samples.
We adapt this approach and 
we calculate \acrshort{VS} using the probability product kernel~\citep{jebara2004probability}: $k(x, x') = \sum_{y} p(y \mid x)^{\frac{1}{2}} p(y \mid x')^{\frac{1}{2}}$, where the class likelihoods are given by the classifier.
We compare Pres\acrshort{GAN}~\citep{dieng2019prescribed} and Self-conditioned \gls{GAN}~\citep{liu2020selfconditioned}, two \glspl{GAN} that are known to capture all the modes.
Table~\ref{tab:mode_dropping} shows that Pres\acrshort{GAN} and Self-conditioned \gls{GAN} have the same diversity according to number of modes, they capture all $1000$ modes. 
However, \acrshort{VS} reveals a more fine-grained notion of diversity, indicating that Pres\acrshort{GAN} is more diverse than Self-conditioned \gls{GAN} and that both are less diverse than the original dataset. 
One possibility is that \acrshort{VS} is capturing imbalances in the mode distribution.
To see whether this is the case, we also calculate what we call \emph{Mode Diversity}, the exponential entropy of the predicted mode distribution: $\exp H(\hat{p}(y))$, where $\hat{p}(y) = \frac{1}{n} \sum_{i=1}^n p(y \mid x_i)$. The generative models score lower on \acrshort{VS} than Mode Diversity, indicating that low scores cannot be entirely attributed to imbalances in the mode distribution. 
Therefore \acrshort{VS} captures more aspects of diversity, even when we are using the same representations as existing methods. 

\subsection{Evaluating image generative models for diversity}
\label{sec:images}

\begin{table*}[!t]
  \small
  \setlength{\tabcolsep}{4pt}
    \begin{minipage}{0.49\linewidth}
      \centering
        \begin{tabular}{l r r r r r}
          \toprule
          Model & \acrshort{IS}$_\uparrow$ & \acrshort{FID}$_\downarrow$ & Prec$_\uparrow$ & Rec$_\uparrow$ & \acrshort{VS}$_\uparrow$ \\
          \\
          \multicolumn{6}{l}{\textbf{\acrshort{CIFAR}-10}} \\
          \midrule
          Original &  &  &  &  & 19.50 \\
          \acrshort{VDVAE} & 5.82 & 40.05 & 0.63 & 0.35 & 12.87 \\
          DenseFlow & 6.01 & 34.54 & 0.62 & 0.38 & 13.55 \\
          \acrshort{IDDPM} & 9.24 & 4.39 & 0.66 & 0.60 & 16.86 \\
          \\
           \multicolumn{5}{l}{\textbf{LSUN Cat 256$\times$256}} \\
          \midrule
          Original &  &  &  &  & 15.12 \\
          Style\acrshort{GAN}2 & 4.84 & 7.25 & 0.58 & 0.43 & 13.55 \\
          \acrshort{ADM} & 5.19 & 5.57 & 0.63 & 0.52 & 13.09 \\
          \acrshort{RQ-VT} & 5.76 & 10.69 & 0.53 & 0.48 & 14.91 \\
        \end{tabular}
\end{minipage}\hfill
    \begin{minipage}{.49\linewidth}
      \centering
        \begin{tabular}{ r r r r r}
          \toprule
         \acrshort{IS}$_\uparrow$ &\acrshort{FID}$_\downarrow$ & Prec$_\uparrow$ & Rec$_\uparrow$ & \acrshort{VS}$_\uparrow$ \\
          \\
          \multicolumn{5}{l}{\textbf{ImageNet 64$\times$64}} \\
          \midrule
            &  &  &  & 43.93 \\
           9.68 & 57.57 & 0.47 & 0.37 & 18.04 \\
          5.62 & 102.90 & 0.36 & 0.17 & 12.71 \\
         15.59 & 19.24 & 0.59 & 0.58 & 24.28 \\
          \\
           \multicolumn{5}{l}{\textbf{LSUN Bedroom 256$\times$256}} \\
          \midrule
            &  &  &  & 8.99 \\
          2.55 & 2.35 & 0.59 & 0.48 & 8.76 \\
           2.38 & 1.90 & 0.66 & 0.51 & 7.97 \\
           2.56 & 3.16 & 0.60 & 0.50 & 8.48 \\
        \end{tabular}
\end{minipage} 
\caption{
\label{tab:comparing_image_models}
\acrshort{VS} generally agrees with the existing metrics. On low-resolution datasets (top left and top right) the diffusion model performs better on all of the metrics. 
On the \acrshort{LSUN} datasets (bottom left and bottom right), the diffusion model gets the highest quality scores as measured by \acrshort{IS}, but scores lower on \acrshort{VS}. 
No model matches the diversity score of the original dataset they were trained on.
}
\end{table*}

We now evaluate several recent models for unconditional image generation, comparing the diversity scores with standard evaluation metrics, \gls{IS}~\citep{Salimans2016}, \gls{FID}~\citep{heusel2017gans}, Precision~\citep{sajjadi2018assessing}, and Recall~\citep{sajjadi2018assessing}.
The models we evaluate represent popular classes of generative models, including a variational autoencoder~\citep[\acrshort{VDVAE}; ][]{child2020very}, a flow model~\citep[DenseFlow; ][]{grcic2021densely}, diffusion models (\acrshort{IDDPM}, \citealp{nichol2021improved}; \acrshort{ADM}~\citealp{dhariwal2021diffusion}), \gls{GAN}-based models~\citep{karras2019style,karras2020analyzing}, and an auto-regressive model~\citep[\acrshort{RQ-VT}; ][]{lee2022autoregressive}.
The models are trained on \acrshort{CIFAR}-10~\citep{Krizhevsky2009}, ImageNet~\citep{russakovsky2015imagenet}, or two categories from the \acrshort{LSUN} dataset~\citep{yu2015lsun}.
We either select models that provide precomputed samples, or download publicly available model checkpoints and sample new images using the default hyperparameters. (More details are in Appendix~\ref{app:implementation_details}.)

The standard metrics in this setting use a pre-trained Inception ImageNet classifier to map images to real vectors.
Therefore, we calculate \acrshort{VS} using the cosine similarity between Inception embeddings, using the same 2048-dimensional representations used for evaluating \acrshort{FID} and Precision/Recall.
As a result, the highest possible similarity score is 2048.
The baseline metrics are reference-based, with the exception of \acrshort{IS}. 
\acrshort{FID} and \acrshort{IS} capture diversity implicitly. Recall was introduced to capture diversity explicitly, with diversity defined as coverage of the reference distribution.

The results of this comparison are in Table~\ref{tab:comparing_image_models}.
On the lower resolution datasets (top left and top right), \acrshort{VS} generally agrees with the existing metrics.
On those datasets the diffusion model performs better on all of the metrics.
  On the \acrshort{LSUN} datasets (bottom left and bottom right), the diffusion model gets the highest quality scores as measured by precision and recall, but scores lower on \acrshort{VS}.
  In these cases, \acrshort{VS} can be interpreted as complementing the existing metrics.
  For example, on \acrshort{LSUN} Cat, the ADM model achieves a precision score of 0.63 and recall of 0.52, implying that 63\% of generated images look like reference images, and that the generated images cover 52\% of the reference distribution; however, the low \acrshort{VS} suggests that the remaining images have low internal diversity---for example, the model may generate many near-duplicates.
No model matches the diversity score of the original dataset they were trained on.
In addition to comparing the diversity of the models, we can also compare the diversity scores between datasets:
as a function of Inception similarity, the most diverse dataset is ImageNet 64$\times$64, followed by \acrshort{CIFAR}-10, 
followed by \acrshort{LSUN} Cat, and then \acrshort{LSUN} Bedroom.
 Cat (all cats, but coming in different species), followed by \acrshort{LSUN} Bedrooms.

\acrshort{VS} should be understood as the diversity with respect to a specific similarity function, in this case, the Inception ImageNet similarity.
We illustrate this point in in the appendix (Figure~\ref{fig:lsun_cat_comparing_kernels}) by comparing the top eigenvalues of the kernel matrices corresponding to the 
cosine similarity between Inception embeddings and pixel vectors.
Inception similarity captures a form of semantic similarity, with components corresponding to particular cat breeds, while the pixel kernel provides a 
simple form of visual similarity, with components corresponding to broad differences in lightness, darkness, and color.

\subsection{Evaluating decoding algorithms for text for diversity}
\label{sec:text}
\begin{table*}[ht]
\centering
\begin{tabular}{l c c c}
  \toprule
Source & BLEU & N-gram diversity ($\uparrow$) & $D^K_1$ ($\uparrow$)\\
\midrule
Human &  & 0.82 & 3.12\\
  \midrule
Beam Search & 0.27 & 0.42 & 2.44\\
DBS $\gamma=0.2$ & 0.25 & 0.49 & 2.49\\
DBS $\gamma=0.5$ & 0.22 & 0.63 & 2.87\\
DBS $\gamma=0.5$ & 0.21 & 0.68 & 2.95\\
\bottomrule
\end{tabular}
\caption{
\label{tab:caption_diversity}
Quality and diversity scores for an image captioning model using different decoding algorithms. \textbf{BS:} Beam search. \textbf{DBS:} Diverse beam search~\citep{vijayakumar2018diverse}, varying the diversity penalty $\gamma$.
BLEU measures n-gram overlap with the human-written reference captions, a proxy for quality.
${\ourmetric}$ is calculated using a BLEU score kernel.
Using Diverse Beam Search with higher diversity penalties leads to higher diversity scores, according to both metrics, but a lower quality score.
The underlying model is a GPT-2-based model trained on MS COCO.
}
\end{table*}

We evaluate diversity on the \acrshort{MS COCO} image-captioning dataset~\citep{lin2014microsoft}, following prior work on diverse text generation~\citep{vijayakumar2018diverse}.
In this setting, the subjects of evaluation are diverse decoding algorithms rather than parametric models.
Given a fixed conditional model of text $p(x \mid c)$, where $c$ is some conditioning context, the aim is to identify a ``Diverse N-Bet List'', a list of sentences that have high likelihood but are mutually distinct.
The baseline metric we compare to is n-gram diversity~\citep{li2016diversity},
which is the proportion of unique n-grams divided by the total number of n-grams. We define similarity using the n-gram overlap kernel:
for a given $n$, the n-gram kernel $k_n$ is the cosine similarity between bag-of-n-gram feature vectors.
We use the average of $k_1, \ldots, k_4$.
This ensures that \acrshort{VS} and n-gram diversity are calculated using the same feature representation.
Each image in the validation split has five captions written by different human annotators, and we compare these with captions generated by a 
publicly available captioning model trained on this dataset~\footnote{https://huggingface.co/ydshieh/vit-gpt2-coco-en-ckpts}.
For each image, we generate five captions using either beam search or \gls{DBS}~\citep{vijayakumar2018diverse}.
\gls{DBS} takes a parameter, $\gamma$, called the diversity penalty, and we vary this between $0.2$, $0.6$, and $0.8$.

Table~\ref{tab:caption_diversity} shows that all diversity metrics increase as expected, ranking beam search the lowest, the human 
captions the highest, and \gls{DBS} in between, increasing with the diversity penalty.
The human diversity score of $4.88$ can be interpreted as meaning that, on average, all five human-written captions 
are almost completely dissimilar from each other, while beam search effectively returns only three distinct responses for every five that it generates.

\subsection{Diagnosing datasets for diversity}
\label{sec:diagnosing_datasets}

\begin{figure*}[t]
     \centering
      \includegraphics[width=\linewidth]{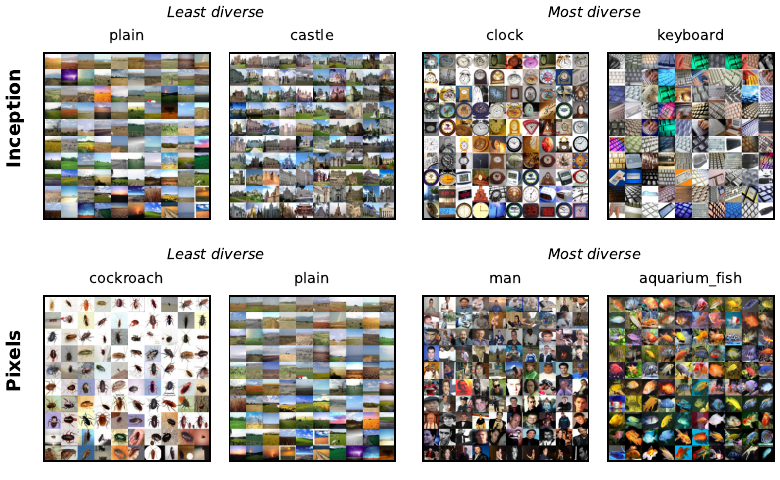}
\caption{
\label{fig:image_categories}
The categories in \acrshort{CIFAR}-100 with the lowest and highest \acrshort{VS}, defining similarity as the cosine similarity between either Inception embeddings or pixel vectors.
We show $100$ examples from each category, in decreasing order of average similarity, with the image at the top left having the highest average similarity scores according to the corresponding kernel.
}
\end{figure*}
In Figure~\ref{fig:image_categories},
we calculate \acrshort{VS} for samples from different categories in \acrshort{CIFAR-100}, 
using the cosine similarity between either Inception embeddings or pixel vectors.
The pixel diversity is highest for categories like ``aquarium fish'', which vary in color, brightness, and orientation, and lowest for categories like ``cockroach'' in which images have similar regions of high pixel intensity (like white backgrounds).
The Inception diversity is less straightforward to interpret, but might correspond to some form of semantic diversity---for example, the Inception diversity might be lower for classes like ``castle,'' that correspond to distinct ImageNet categories, and higher for categories like ``clock'' and ``keyboard'' that are more difficult to classify.
In Appendix~\ref{app:diagnosing_datasets}, we show additional examples from text, molecules, and other image datasets.

\section{Limitations}
\label{sec:limitations}
Here, we discuss several things to consider when interpreting \acrshort{VS} scores.
First, \acrshort{VS} is a reference-free metric, meaning that it measures the internal diversity of a set and not how it relates to a reference distribution. While this makes \acrshort{VS} useful in settings where there is no reference distribution, it also means that it is possible to get a high diversity score by, for example, sampling random noise. This is also true of other reference-free metrics, like IntDiv and n-gram diversity. Therefore, \acrshort{VS} should be used alongside a quality metric.
Second, like other similarity-based metrics, \acrshort{VS} is dependent on the choice of similarity function. If the similarity function is too sensitive, all sets will appear very diverse, while if it is not sensitive enough, all sets will have low diversity. Additionally, the wrong choice of similarity function can introduce biases that lead to skewed diversity scores. Therefore, care should be taken when choosing a similarity function to ensure that it is appropriate for the specific application.
Finally, the computational cost of calculating \acrshort{VS} can be high when the similarity function is not associated with low-dimensional embeddings.

\section{Discussion}
\label{sec:discussion}
We introduced the Vendi Score, a metric for evaluating diversity in \gls{ML}. 
The Vendi Score is defined as a function of the pairwise similarity scores between elements of a sample and can be interpreted as the effective number of unique elements in the sample. The Vendi Score is interpretable, general, and applicable to any domain where similarity can be defined. It is unsupervised, in that it does not require labels or a reference probability distribution or dataset. Importantly, the Vendi Score allows its user to specify the form of diversity they want to measure via the similarity function. We showed the Vendi Score can be computed efficiently exactly and showcased its usefulness in several \gls{ML} applications, different datasets, and different domains. In future work, we will leverage the Vendi Score to improve data augmentation, an important \gls{ML} approach in settings with limited data.
 
\subsection*{Acknowledgements}
Adji Bousso Dieng is supported by the National Science Foundation, Office of Advanced Cyberinfrastructure (OAC): \#2118201.
We thank Sadhika Malladi for pointing us to the effective rank.
Adji Bousso Dieng would like to dedicate this paper to her PhD advisors, David Blei and John Paisley.

\clearpage
\bibliographystyle{apa}
\bibliography{main}

\clearpage

\section{Proofs}
\label{app:properties}

\subsection{Probability-weighted Vendi Score}
\begin{definition}[Probability-Weighted Vendi Score]\label{def:weighted_vendi_score}
  Let $\vp \in \Delta_n$ denote a probability distribution on a discrete space $\gX = \{x_1, \ldots, x_n\}$, where $\Delta_n$ denotes the $(n-1)$-dimensional simplex, let $k: \gX \times \gX \to \sR$ be a positive semidefinite similarity function, with $k(x, x) = 1$ for all $x$, and let $\mK \in \sR^{n \times n}$ denote the kernel matrix with $K_{i, j} = k(x_i, x_j)$.
Let $\tilde{\mK}_{\vp} = \mathrm{diag}(\sqrt{\vp})\mK\mathrm{diag}(\sqrt{\vp})$
denote the probability-weighted kernel matrix. Let $\lambda_1, \ldots, \lambda_n$ denote the eigenvalues of $\tilde{\mK}_{\vp}$. 
  The Vendi Score (${\ourmetric}$) is defined as the exponential of the Shannon entropy of the eigenvalues of $\tilde{\mK}_{\vp}$:
  \begin{align}\label{eq:def-general}
    {\ourmetric}_k(x_1, \ldots, x_n, \vp) =  \exp\left( - \sum_{i=1}^S \lambda_i \log \lambda_i \right).
  \end{align}
\end{definition}
When all elements in the sample are completely dissimilar, the probability-weighted Vendi Score defined in~\ref{def:weighted_vendi_score} reduces to the exponential of the Shannon entropy of the weighting distribution:
\begin{lemma}\label{lemma:dissimilar}
Let $\vp \in \Delta_n$ be a probability distribution over $x_1, \ldots, x_n$ and suppose $k(x_i, x_j) = 0$ for all $i \neq j$. Then ${\ourmetric}_k(x_1, \ldots, x_n, \vp) = \exp H(\vp)$, the exponential of the Shannon entropy of $\vp$.
\end{lemma}

\subsection{Proof of~\ref{lemma:von_neumann}}
\begin{lemma*}
Consider the same setting as Definition~\ref{def:vendi_score}. Then 
  \begin{align}
    {\ourmetric}_k(x_1, \ldots, x_n) =  \exp \left (-\tr \left(\frac{\mK}{n} \log \frac{\mK}{n} \right) \right )
    .
  \end{align}
\end{lemma*}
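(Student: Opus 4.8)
The plan is to reduce the claimed identity to the spectral theorem for the symmetric positive semi-definite matrix $\mK/n$. First I would record that, under the hypotheses of \Cref{def:vendi_score}, $\mK$ is symmetric and positive semi-definite with unit diagonal, hence so is $\mK/n$; therefore it admits an orthogonal eigendecomposition $\mK/n = \mU \bm{\Lambda} \mU^\top$, where $\mU$ is orthogonal and $\bm{\Lambda} = \diag(\lambda_1, \ldots, \lambda_n)$ collects the eigenvalues, which by \Cref{lemma:eigen} lie in $\Delta_n$ and in particular are nonnegative.

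Next I would make precise what $\tr\!\left(\tfrac{\mK}{n}\log\tfrac{\mK}{n}\right)$ means when $\mK/n$ is singular. Define $g:[0,\infty)\to\sR$ by $g(t) = t\log t$ for $t>0$ and $g(0)=0$; this is continuous on $[0,\infty)$, matching the convention $0\log 0 = 0$ already adopted in \Cref{def:vendi_score}. Then the matrix $\tfrac{\mK}{n}\log\tfrac{\mK}{n}$ is to be read, via the functional calculus for symmetric matrices, as $g(\mK/n) = \mU\, g(\bm{\Lambda})\, \mU^\top$, where $g(\bm{\Lambda}) = \diag\big(g(\lambda_1), \ldots, g(\lambda_n)\big) = \diag\big(\lambda_1\log\lambda_1, \ldots, \lambda_n\log\lambda_n\big)$.

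With this in hand the computation is immediate: using the cyclic invariance of the trace and $\mU^\top\mU = \mI$,
\begin{align*}
\tr\!\left(\tfrac{\mK}{n}\log\tfrac{\mK}{n}\right)
= \tr\!\big(\mU\, g(\bm{\Lambda})\, \mU^\top\big)
= \tr\!\big(g(\bm{\Lambda})\, \mU^\top\mU\big)
= \tr\, g(\bm{\Lambda})
= \sum_{i=1}^n \lambda_i \log \lambda_i .
\end{align*}
Exponentiating the negative of both sides and invoking \Cref{eq:vendi-score} gives ${\ourmetric}_k(x_1,\ldots,x_n) = \exp\!\big(-\tr(\tfrac{\mK}{n}\log\tfrac{\mK}{n})\big)$, as claimed.

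The only real point requiring care — and thus the ``main obstacle,'' though it is a mild one — is the interpretation of $\log\tfrac{\mK}{n}$ when $\mK/n$ has zero eigenvalues: one must note that the product $\tfrac{\mK}{n}\log\tfrac{\mK}{n}$ (equivalently, the von Neumann entropy expression) is well defined through the continuous extension $g(t)=t\log t$, $g(0)=0$, so no genuine singularity arises. Everything else is the spectral theorem plus the trace identity, and no further estimates are needed.
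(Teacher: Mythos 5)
Your proof is correct and follows essentially the same route as the paper's: eigendecompose $\mK/n$, push the logarithm through to the diagonal of eigenvalues, and use the (cyclic/similarity) invariance of the trace to reduce to $\sum_i \lambda_i \log \lambda_i$. The one place you go beyond the paper is in explicitly defining $\tfrac{\mK}{n}\log\tfrac{\mK}{n}$ as $g(\mK/n)$ with $g(t)=t\log t$, $g(0)=0$, which properly handles zero eigenvalues --- a point the paper's proof leaves implicit.
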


\begin{proof}
  For any square matrix $\mX \in \sR^{n \times n}$, if $\mX$ has an eigendecomposition $\mX = \mU \mLambda \mU^{-1}$, then $\log \mX = \mU \left ( \log \mLambda \right) \mU^{-1}$, where $\log \mLambda = \mathrm{diag}(\log \lambda_1, \ldots, \log \lambda_n)$ is a diagonal matrix whose diagonal entries are the logarithms of the eigenvalues of $\mX$. 
  Also, $\tr (\mX) = \tr \left (\mU \mLambda  \mU^{-1} \right) = \tr (\mLambda),$ because the trace is similarity-invariant.
  $\mK/n$ is diagonalizable because it is positive semidefinite,
  so let $\mK/n = \mU \mLambda  \mU^{-1}$ denote the eigendecomposition. Then
  \begin{align*}
    \tr (\mK/n \log \mK/n) &= \tr \left ( \mU \mLambda  \mU^{-1} \log \left (\mU \mLambda  \mU^{-1}  \right) \right)\\
    &= \tr \left ( \mU \mLambda  \mU^{-1} \mU \left (\log \mLambda \right) \mU^{-1} \right)\\
    &= \tr \left ( \mLambda  \log \mLambda \right)\\
    &= \sum_{i=1}^n \lambda_i \log \lambda_i.
  \end{align*}
  Therefore $$
  {\ourmetric}_k(x_1, \ldots, x_n) =  \exp \left (-\sum_{i=1}^n \lambda_i \log \lambda_i \right) = \exp \left (-\tr \left(\frac{\mK}{n} \log \frac{\mK}{n} \right) \right ).
  $$
\end{proof}

\subsection{Proof of~\ref{lemma:dissimilar}}
\begin{lemma*}
Let $\vp \in \Delta_n$ be a probability distribution over $x_1, \ldots, x_n$ and suppose $k(x_i, x_j) = 0$ for all $i \neq j$. Then ${\ourmetric}_k(x_1, \ldots, x_n, \vp) = \exp H(\vp)$, the exponential of the Shannon entropy of $\vp$.
\end{lemma*}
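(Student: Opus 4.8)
The plan is to reduce everything to a trivial eigenvalue computation by first identifying the kernel matrix explicitly. The hypotheses are that $k(x_i,x_i)=1$ for all $i$ (from the standing assumption in \Cref{def:weighted_vendi_score}) and $k(x_i,x_j)=0$ for all $i\neq j$. Hence the kernel matrix $\mK\in\sR^{n\times n}$ with entries $K_{i,j}=k(x_i,x_j)$ is exactly the identity matrix $\mI_n$.

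Next I would substitute this into the definition of the probability-weighted kernel matrix. We get
\begin{align*}
  \tilde{\mK}_{\vp} = \mathrm{diag}(\sqrt{\vp})\,\mK\,\mathrm{diag}(\sqrt{\vp}) = \mathrm{diag}(\sqrt{\vp})\,\mI_n\,\mathrm{diag}(\sqrt{\vp}) = \mathrm{diag}(\vp),
\end{align*}
a diagonal matrix whose diagonal entries are $p_1,\ldots,p_n$. Since the eigenvalues of a diagonal matrix are precisely its diagonal entries, the eigenvalue vector is $\vlambda=(\lambda_1,\ldots,\lambda_n)=(p_1,\ldots,p_n)$. (One also checks $\vlambda\in\Delta_n$, which is immediate here and is anyway guaranteed by \Cref{lemma:eigen}.)

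Finally I would plug these eigenvalues into \Cref{eq:def-general}:
\begin{align*}
  {\ourmetric}_k(x_1,\ldots,x_n,\vp) = \exp\!\left(-\sum_{i=1}^n \lambda_i\log\lambda_i\right) = \exp\!\left(-\sum_{i=1}^n p_i\log p_i\right) = \exp\!\left(H(\vp)\right),
\end{align*}
using the convention $0\log 0=0$, which matches the convention used in the definition of $H$. This completes the argument. There is no real obstacle; the only thing to be careful about is correctly invoking the standing normalization $k(x,x)=1$ so that $\mK=\mI_n$ rather than merely a diagonal matrix, and noting that the same $0\log 0=0$ convention is used on both sides so the identity holds even when some $p_i=0$.
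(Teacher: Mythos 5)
Your proposal is correct and follows essentially the same route as the paper's proof: observe that the off-diagonal entries vanish so that $\tilde{\mK}_{\vp}$ is diagonal with entries $p_1,\ldots,p_n$, read off the eigenvalues, and exponentiate the resulting Shannon entropy. You simply spell out the intermediate step $\mK=\mI_n$ and the $0\log 0$ convention more explicitly than the paper does.
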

\begin{proof}
  If all element in $\vp$ are completely dissimilar, then $\tilde{\mK}_{\vp}$ is a diagonal matrix, and the eigenvalues $\lambda_1, \ldots, \lambda_S$ are the diagonal entries, which are the entries of $\vp$.
  So the von Neumann entropy of $\tilde{\mK}_{\vp}$ is identical to the Shannon entropy of $\vp$, and the exponential is the Vendi Score.
\end{proof}

\subsection{Proof of Theorem~\ref{thm:properties}}

\begin{proof}
  (a) Effective number: If $\vp$ is the uniform distribution over $N$ completely dissimilar elements, then $\tilde{\mK}_{\vp}$ is a diagonal matrix with each diagonal entry equal to $1/N$. The eigenvalues of a diagonal matrix are the diagonal entries, so ${\ourmetric}_K(\vp) = \exp H(1/N, \ldots, 1/N) = \exp \log N = N$. On the other hand, if all elements are completely similar to each other, then $\tilde{\mK}_{\vp}$ has rank one and so the Vendi Score is equal to one.

(b) Identical elements:  The eigenvalues of $\tilde{\mK}_{\vp}$ are the same as the eigenvalues of the covariance matrix of the corresponding feature space:
\begin{align*}
\tilde{\mSigma}_{\vp} &= \sum_{i=1}^N p(x_i) \phi(x_i) \phi(x_i)^{\top}.\\
\end{align*}
Suppose elements $i$ and $j$ are identical, and let $\vp'$ denote the probability distribution created by combining $i$ and $j$, i.e. $p'_i = p_i + p_j$ and $p'_j = 0$. Clearly, $\tilde{\mSigma}_{\vp} = \tilde{\mSigma}_{\vp'}$, and so ${\ourmetric}_k(x_1, \ldots, x_n, \vp) = {\ourmetric}_{k}(x_1, \ldots, x_n, \vp')$.

(c) Partitioning: Suppose $N$ samples are partitioned into $M$ groups $\gS_1, \ldots, \gS_M$ such that, for any $i \neq j$, for all $x \in S_i, x' \in S_j$, $k(x, x') = 0$.
Let $p_i = |S_i|/\sum_j |S_j|$ denote the relative size of group $i$, and let $\mK$ denote kernel matrix of $\cup_i S_i$, sorted in order of group index, and let $\mK_{S_i}$ denote the restriction of $\mK$ to elements in $S_i$.
Then $\mK/N$ is a block diagonal matrix, with each block $i$ equal to $p_i\mK_{S_i}$.
The eigenvalues of a block diagonal matrix are the combined eigenvalues of each block, and the partitioning property then follows from the partitioning property of the Shannon entropy.

(e) Symmetry: The eigenvalues of a matrix are unchanged by orthonormal transformation, and the Shannon entropy is symmetric in its arguments, so the Vendi Score is symmetric.
\end{proof}

\subsection{Sample Complexity}
\label{app:sample_complexity}
The Vendi Score is the exponential of the kernel entropy, $H(\mK) = -\tr \left(\frac{\mK}{n} \log \frac{\mK}{n}\right)$.
~\citet{bach2022information} proves that empirical estimator of the kernel entropy, $\hat{H}$, has a convergence rate proportional to $1/\sqrt{n}$, where $n$ is the number of samples. Additionally, by Jensen's inequality, the $\mathbb{E}[\hat{H}]$ is no greater than $H$. Therefore:
\begin{align*}
\exp(H) - \exp(\hat{H}) &\leq \exp(H) - \exp\left(H - \frac{1}{\sqrt{n}}\right) \\
                        &= \exp(H) - \exp(H)/\exp\left(\frac{1}{\sqrt{n}}\right) \\
                        &= \exp(H)\left(1 - \frac{1}{\sqrt{n}}\right). \\
\end{align*}
The empirical estimator of the Vendi Score therefore also has a convergence rate proportional to $1/\sqrt{n}$, with a constant term depending on the true entropy.

\section{Implementation Details}
\label{app:implementation_details}

\subsection{Images}
\label{app:image_implementation_details}

\paragraph{Stacked MNIST}
We train GANs on Stacked MNIST using the publicly available code for PresGANs~\footnote{https://github.com/adjidieng/PresGANs} and self-conditioned GANs~\footnote{https://github.com/stevliu/self-conditioned-gan}.
The models share the same DCGAN~\citep{Radford2015} architecture and are trained on the same dataset of 60,000 Stacked MNIST images, rescaled to 32$\times$32 pixels, and other hyperparameters are set according to the descriptions in the papers.
The models are trained for 50 epochs and the diversity scores are evaluated every five epochs by taking 10,000 samples.
For both models, we report the scores from the epoch corresponding to the highest ${\ourmetric}$ score.
As in prior work~\citep{Metz2017}, we classify Stacked MNIST digits by applying a pretrained MNIST classifier to each color channel independently.
The 1000-dimensional Stacked MNIST probability vector is then the tensor product of the three 10-dimensional probability vectors predicted for the three channels.

\paragraph{Obtaining Image Samples}
In Section~\ref{sec:images}, we calculate the diversity scores of
several recent generative models of images. We select models that
represent a range of families of generative models and and provide
publicly available samples or model checkpoints for common image
datasets. On the low-resolution datasets, we generate 50,000 samples
from each model using the official code for
VDVAE,\footnote{https://github.com/openai/vdvae/}
DenseFlow,\footnote{https://github.com/matejgrcic/DenseFlow}, and
IDDPM,\footnote{https://github.com/openai/improved-diffusion}, each of
which provides a checkpoint for unconditional image generation models
on CIFAR-10 and ImageNet-64. For IDDPM, we sample using
DDIM~\citep{song2021denoising} for 250 steps, and otherwise use the
default sampling parameters. For the higher-resolution datasets, we
use the 50,000 precomputed samples provided
by~\citet{dhariwal2021diffusion}\footnote{https://github.com/openai/guided-diffusion}
for ADM and StyleGAN models.
We obtain 50,000 samples from the RQ-VAE/Transformer model using the code and checkpoints provided by the authors,\footnote{https://github.com/kakaobrain/rq-vae-transformer} with the default sampling parameters.

\paragraph{Calculating Image Metrics}
In Table~\ref{tab:comparing_image_models}, we calculate standard image quality and diversity metrics, which are based on Inception embeddings.
These Inception-based metrics are sensitive to a number of implementation details~\citep{parmar2022aliased} and in general cannot be compared directly between papers. 
For a consistent comparison, we calculate all scores using the evaluation code provided by~\citet{dhariwal2021diffusion}. We also calculated FID and Precision/Recall using the provided reference images and statistics, with the exception of CIFAR-10, for which we use the training set as the reference.
(The diversity scores of the \textit{Original} datasets in Table~\ref{tab:comparing_image_models} are calculated using these reference images.)
As a result, the numbers in this table may not be directly comparable to results reported in prior work.

\subsection{Text}
\label{app:text_implementation_details}

\paragraph{Obtaining Image Captions}
In Section~\ref{sec:text}, we sample image captions from a pretrained image-captioning model,\footnote{https://huggingface.co/ydshieh/vit-gpt2-coco-en-ckpts} which is publicly available in Hugging Face~\citep{wolf2019huggingface}, and we use the Hugging Face implementation of beam search and diverse beam search.
For beam search we use a beam size of 5. For diverse beam search, we use a beam size of 10, a beam group size of 10, and set the number of return sequences to 5.

\paragraph{Calculating Text Metrics}
The text metrics we use are calculated in terms of word n-grams, and therefore depend on how sentences are tokenized into words.
We calculate all text metrics using the pre-trained wordpiece tokenizer used by the captioning models.
We use the implementation of the BLEU score in NLTK~\citep{bird2006nltk}.

\section{Additional Results}

\subsection{Assessing Mode Dropping in Datasets}
\label{app:mode_dropping_in_datasets}

\begin{figure}[h]
  \centering
\includegraphics[width=0.49\textwidth]{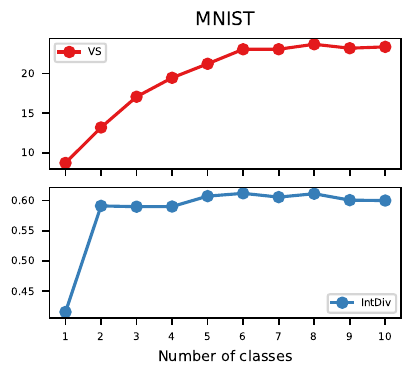}\includegraphics[width=0.49\textwidth]{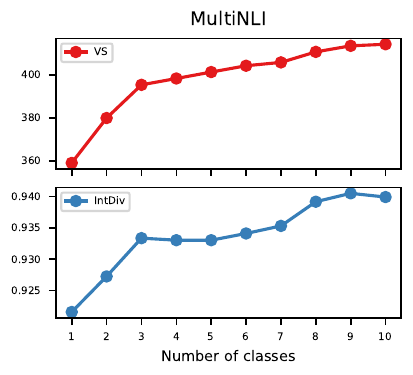}\caption{Detecting mode dropping in image and text datasets. 
    We evaluate \acrshort{VS} and IntDiv on datasets containing $500$ examples drawn uniformly from between one and ten classes: 
    digits in \acrshort{MNIST} and sentences genres in Multi\acrshort{NLI}.
Compared to IntDiv, \acrshort{VS} increases more consistently with the number of classes.
  } \label{fig:mode_dropping}
\end{figure}

In Figure~\ref{fig:mode_dropping}, we examine whether ${\ourmetric}$ captures mode dropping in a controlled setting, where we have information about the ground truth class distribution.
We simulate mode dropping by sampling equal-sized subsets of two classification datasets, with each subset $\gS_i$ containing examples sampled uniformly from the first $i$ categories. 
We perform this experiment one image dataset (\acrshort{MNIST}) and one text dataset (\acrshort{MultiNLI};~\citealp{williams2018broad}), using simple similarity functions.
We compare \acrshort{VS} to the Internal Diversity (IntDiv), defined as above.

\acrshort{MNIST} consists of 28$\times$28-pixel images of hand-written digits, divided into ten classes.
The similarity score we use is the cosine similarity between pixel vectors: $k(x, x') = \langle x, x' \rangle / \|x\| \|x'\|$,
where $x, x'$ are $28^2$-dimensional vectors with entries specifying pixel intensities between 0 and 1.
\acrshort{MultiNLI} is a multi-genre sentence-pair classification dataset.
We use the premise sentences from the validation split (mismatched), which are drawn from one of ten genres.
We define similarity using the n-gram overlap kernel:
for a given $n$, the n-gram kernel $k_n$ can be expressed as the cosine similarity between feature vectors $\phi^n(x)$, where $\phi_i^n(x)$ is equal to the number of times n-gram $i$ appears in $x$.
We use the average of $k(x, x') = \frac{1}{4}\sum_{n=1}^4 k_n(x, x')$.

The results (Figure~\ref{fig:mode_dropping}) show that \acrshort{VS} generally increases with the number of classes, even using these simple similarity scores.
In \acrshort{MNIST} (left), \acrshort{VS} increases roughly linearly for the first six digits (0-5) and then fluctuates. This could occur if the new modes are similar to the other modes in the sample, or have low internal diversity.
In \acrshort{MultiNLI} (right), \acrshort{VS} increases monotonically with the number of genres represented in the sample.
In both cases, \acrshort{VS} has a stronger correlation with the number of modes compared to IntDiv.

\subsection{Evaluating molecular generative models for diversity}
\label{app:molecule_models}
\begin{table}[t]
\centering
\begin{tabular}{lcr}
  \toprule
  Model & IntDiv & \acrshort{VS}\\
  \midrule
  Original & 0.855 & 403.9\\
  \midrule
  \acrshort{AAE} & 0.859 & 501.1\\
  \acrshort{Char-RNN} & 0.856 & 482.4\\
  Combinatorial & 0.873 & 536.9\\
  \acrshort{HMM} & 0.871 & 250.9\\
  \acrshort{JTN} & 0.856 & 489.5\\
  Latent \acrshort{GAN} & 0.857 & 486.4\\
  N-gram & 0.874 & 479.8\\
  \acrshort{VAE} & 0.856 & 475.3\\
  \bottomrule
\end{tabular}
\caption{
\label{tab:molecule_metrics}
IntDiv and \acrshort{VS} for generative models of molecules. The \acrshort{HMM} has one of the highest IntDiv scores, but scores much lower on \acrshort{VS} . 
An analysis of $250$ molecules from the \acrshort{HMM} reveals \acrshort{VS} is more accurate in this case. (See \ref{fig:molecule_kernels}.)
}
\end{table}
We evaluate samples from generative models provided in the \acrshort{MOSES} benchmark~\citep{polykovskiy2020moses}, using the first 2,500 valid molecules in each sample.
Following prior work, our similarity function is the Morgan fingerprint similarity (radius 2), implemented in RDKit.\footnote{RDKit: Open-source Cheminformatics. https://www.rdkit.org.}
IntDiv ranks the \acrshort{HMM} among the most diverse models, while ${\ourmetric}$ ranks it as the least diverse  (see Section~\ref{sec:molecule_models}).

\subsection{Evaluating image generative models for diversity}

In Table~\ref{tab:app_comparing_image_models}, we replicate the table described in Section~\ref{sec:images} and add an additional column, which evaluates diversity using the cosine similarity between pixel vectors as the similarity function.

\begin{table}[h]
  \small
  \setlength{\tabcolsep}{4pt}
    \begin{minipage}{.49\linewidth}
      \centering
      \resizebox{\linewidth}{!}{\begin{tabular}{l r r r r r r}
          \toprule
          Model & IS$_\uparrow$ & FID$_\downarrow$ & Prec$_\uparrow$ & Rec$_\uparrow$ & ${\ourmetric}_I$$_\uparrow$ & ${\ourmetric}_P$$_\uparrow$ \\
          \\
          \multicolumn{6}{l}{\textbf{CIFAR-10}} \\
          \midrule
          Original &  &  &  &  & 19.50 & 3.52\\
          VDVAE & 5.82 & 40.05 & 0.63 & 0.35 & 12.87 & 3.34\\
          DenseFlow & 6.01 & 34.54 & 0.62 & 0.38 & 13.55 & 2.94\\
          IDDPM & 9.24 & 4.39 & 0.66 & 0.60 & 16.86 & 3.27 \\
          \\
          \multicolumn{6}{l}{\textbf{ImageNet 64$\times$64}} \\
          \midrule
          Original &  &  &  &  & 43.93 & 4.43\\
          VDVAE & 9.68 & 57.57 & 0.47 & 0.37 & 18.04 & 4.24\\
          DenseFlow & 5.62 & 102.90 & 0.36 & 0.17 & 12.71 & 3.51\\
          IDDPM & 15.59 & 19.24 & 0.59 & 0.58 & 24.28 & 4.57\\
        \end{tabular}
     }
    \end{minipage}\hfill
    \begin{minipage}{.49\linewidth}
      \centering
      \resizebox{\linewidth}{!}{\begin{tabular}{l r r r r r r}
          \toprule
          Model & IS$_\uparrow$ & FID$_\downarrow$ & Prec$_\uparrow$ & Rec$_\uparrow$ & ${\ourmetric}_I$$_\uparrow$ & ${\ourmetric}_P$$_\uparrow$ \\
          \\
          \multicolumn{6}{l}{\textbf{LSUN Bedroom 256$\times$256}} \\
          \midrule
          Original &  &  &  &  & 8.99 & 3.10\\
          StyleGAN & 2.55 & 2.35 & 0.59 & 0.48 & 8.76 & 3.09\\
          ADM & 2.38 & 1.90 & 0.66 & 0.51 & 7.97 & 3.27\\
          RQ-VT & 2.56 & 3.16 & 0.60 & 0.50 & 8.48 & 3.67\\
          \\
          \multicolumn{6}{l}{\textbf{LSUN Cat 256$\times$256}} \\
          \midrule
          Original &  &  &  &  & 15.12 & 4.58\\
          StyleGAN2 & 4.84 & 7.25 & 0.58 & 0.43 & 13.55 & 4.53\\
          ADM & 5.19 & 5.57 & 0.63 & 0.52 & 13.09 & 4.81\\
          RQ-VT & 5.76 & 10.69 & 0.53 & 0.48 & 14.91 & 5.83\\
        \end{tabular}
        }
    \end{minipage} 
\caption{
\label{tab:app_comparing_image_models}
We evaluate samples from several recent models, measuring similarity using either Inception representations (${\ourmetric}_I$) or pixels (${\ourmetric}_P$).
The pixel similarity score is the cosine similarity between pixel vectors, calculated after resizing the images to 32$\times$32 pixels.
The pixel similarity and Inception similarity scores do not always agree---for example, if the images in a sample represent a variety of ImageNet classes by share a similar color palette, we might expect the sample to have high Inception diversity but low pixel diversity.
The pixel diversity scores are on a lower scale, indicating that this similarity metric is less capable of making fine-grained distinctions between the images in these samples.
}
\end{table}

\acrshort{VS} should be understood as the diversity with respect to a specific similarity function, in this case, the Inception ImageNet similarity.
We illustrate this point in Figure~\ref{fig:lsun_cat_comparing_kernels} by comparing the top eigenvalues of the kernel matrices corresponding to the 
Inception similarity and the pixel similarity, which we calculate by resizing the images to 32$\times$32 pixels and taking the cosine similarity between pixel vectors.
Inception similarity provides a form of semantic similarity, with components corresponding to particular cat breeds, while the pixel kernel provides a 
simple form of visual similarity, with components corresponding to broad differences in lightness, darkness, and color.
\begin{figure*}[t!]
     \centering
     \begin{subfigure}[c]{0.5\linewidth}
         \centering
         \includegraphics[width=\linewidth]{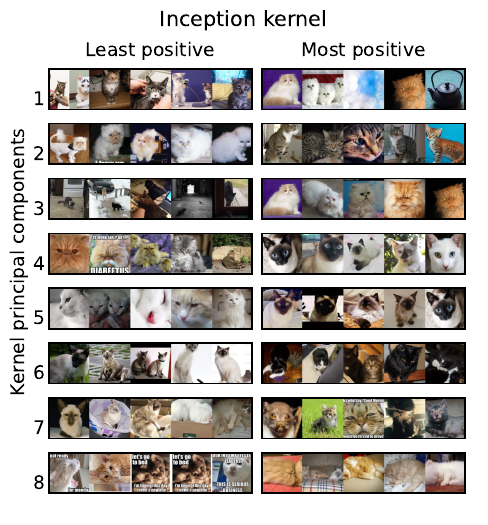}
     \end{subfigure}\hfill
     \begin{subfigure}[c]{0.5\linewidth}
         \centering
         \includegraphics[width=\linewidth]{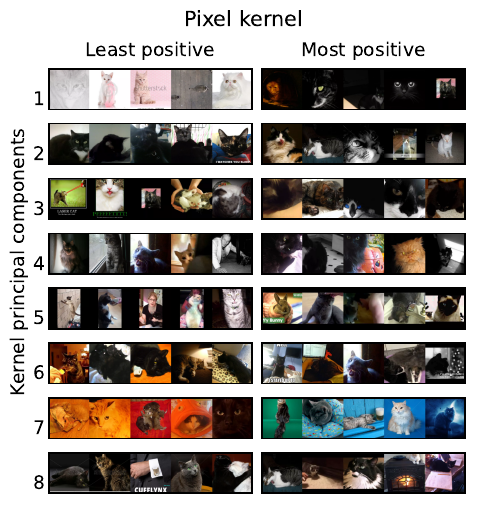}
     \end{subfigure}
\caption{
\label{fig:lsun_cat_comparing_kernels}
The choice of similarity function provides a way of specifying the notion of diversity that is relevant for a given application.
We project \acrshort{LSUN} Cat images along the top eigenvectors of the kernel matrix, using either Inception features or pixels to define similarity.
Inception similarity provides a form of semantic similarity, with components corresponding to particular cat breeds, while the pixel kernel captures visual similarity.
For each eigenvector $\vu$, we show the four images with the highest and lowest entries in $\vu$. 
For both kernels, every similarity score is positive, so all entries in the top eigenvector have the same sign; the images with the 
highest weights in this component have the highest expected similarity scores.
The remaining eigenvectors partition the images along different dimensions of variation.
}
\end{figure*}

\subsection{Evaluating decoding algorithms for text for diversity}
\label{app:text}

\begin{figure}[h]
\centering
\resizebox{0.5\linewidth}{!}{\includegraphics{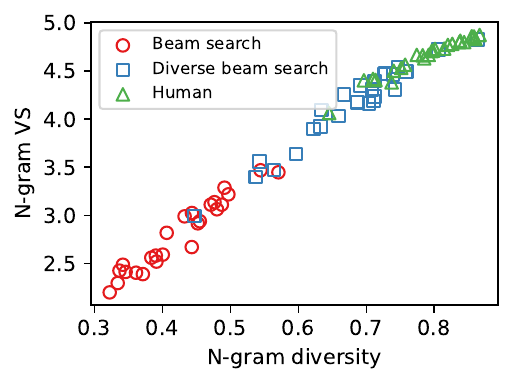}
}
\caption{
\label{fig:caption_diversity}
${\ourmetric}$ is correlated with N-gram diversity. Each point represents a group of five captions for a particular image.
}
\end{figure}

In Figure~\ref{fig:caption_diversity}, we plot the relationship between ${\ourmetric}$ and n-gram diversity using the MS-COCO captioning data and the n-gram overlap kernel described in Section~\ref{sec:text}.
The figure shows that ${\ourmetric}$ is highly correlated with n-gram diversity, which is expected given that our similarity function is based on n-gram overlap.
Nonetheless, there are some data points that the metrics rank differently.
This is because n-gram diversity conflates two properties: the diversity of n-grams within a single sentences and the n-gram overlap between sentences.
We highlight two examples in Figure~\ref{fig:caption_diversity_example}.
In general, the instances that n-gram diversity ranks lower compared to ${\ourmetric}$ contain individual sentences that repeat phrases.
On the other hand, n-gram diversity can be inflated in cases when one sentence in the sample is much longer than the others, even if the other sentences are not diverse.

\begin{figure}[h!]
  \small
\begin{minipage}[t]{.48\textwidth}
  High Vendi Score, low n-gram diversity:
  \it{}
  \begin{itemize}[leftmargin=*]
\item two men in bow ties standing next to steel rafter.
\item several men in suits talking together in a room.
\item an older \textbf{man in a tuxedo} standing next to a younger \textbf{man in a tuxedo} wearing glasses.
\item two men wearing tuxedos glance at each other.
\item older \textbf{man in tuxedo} sitting next to another younger \textbf{man in tuxedo}.
  \end{itemize}
\end{minipage}\hfill
\begin{minipage}[t]{.48\textwidth}
  Low Vendi Score, high n-gram diversity:
  \it{}
  \begin{itemize}[leftmargin=*]
\item a man and woman cutting a slice of cake by trees.
\item a couple of people standing cutting a cake.
\item \textbf{the dork with the earring stands next to the asian beauty who is way out of his league.}
\item a newly married couple cutting a cake in a park.
\item a bride and groom are cutting a cake as they smile.
\end{itemize}
\end{minipage}
\caption{
\label{fig:caption_diversity_example}
Two sets of captions that receive different ranks according Vendi Score and n-gram diversity.
We manually highlight some features contributing to the different scores. 
On the left, a sentence contains repeated n-grams, which are penalized by n-gram diversity.
On the right, one long outlier sentence contributes most of the n-grams for this group, greatly increasing the n-gram diversity.
}
\end{figure}

\subsection{Diagnosing datasets for diversity}
\label{app:diagnosing_datasets}

\paragraph{Molecules}
We evaluate the diversity scores of molecules in the GoodScents database of perfume materials,\footnote{http://www.thegoodscentscompany.com/} which has been used in prior machine learning research on odor modeling~\citep{sanchez2019machine}.
We use the standardized version of the data provided by the Pyrfume library.~\footnote{https://pyrfume.org/}
Each molecule in the dataset is labeled with one or more odor descriptors (for example, ``clean, oily, waxy'' or ``floral, fruity, green'').
We form groups of molecules corresponding to the seven most common odor descriptors, with each group consisting of 500 randomly sampled molecules.
We evaluate VS using two similarity functions: the Morgan fingerprint similarity (radius 2), and the similarity between odor descriptors, defined as the cosine similarity between descriptor indicator vectors $\phi(x)$, where $\phi_i(x)$ is equal to one if descriptor $i$ is associated with molecule $x$ and zero otherwise.

\begin{figure}[!hbpt]
\begin{minipage}{\linewidth}
     \centering
      \includegraphics{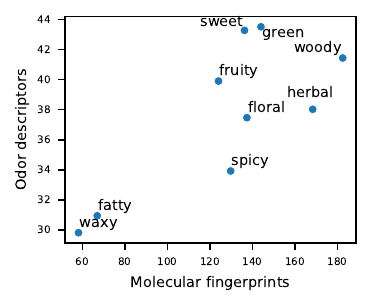}
\captionof{figure}{
\label{fig:goodscents_categories}
The Vendi Scores of samples containing $500$ molecules with different scent labels, calculating diversity using two similarity functions:
Morgan molecular fingerprint similarity, and the similarity between odor descriptors. Each molecule is associated 
with one or more human-written tags (e.g. ``floral, fruity, green, sweet''), and the odor-descriptor similarity is the cosine similarity between binary tag indicator vectors.
}
\end{minipage}
\end{figure}

The diversity scores are plotted in Figure~\ref{fig:goodscents_categories}.
The molecular diversity score and the odor-descriptor diversity scores are correlated, meaning that words like ``woody'' and ``green'' are used to describe molecules that vary in molecular structure and also elicit diverse odor descriptions, while words like ``waxy'' and ``fatty'' are used for molecules that are similar to each other and elicit similar odor descriptions.
For example, the word ``green'' appears in tag sets such as
``aldehydic, citrus, cortex, green, herbal, tart'' and ``floral, green, terpenic, tropical, vegetable, woody'', whereas the word ``waxy'' tends to co-occur with the same tags
(``fresh, waxy'';
``fresh, green, melon rind, mushroom, tropical, waxy'';
``fruity, green, musty, waxy'').
Molecules from the categories with the highest and lowest scores are illustrated in Figure~\ref{fig:molecule_categories}.

\begin{figure}[ht!]
\begin{minipage}{\linewidth}
     \centering
      \includegraphics[width=\linewidth]{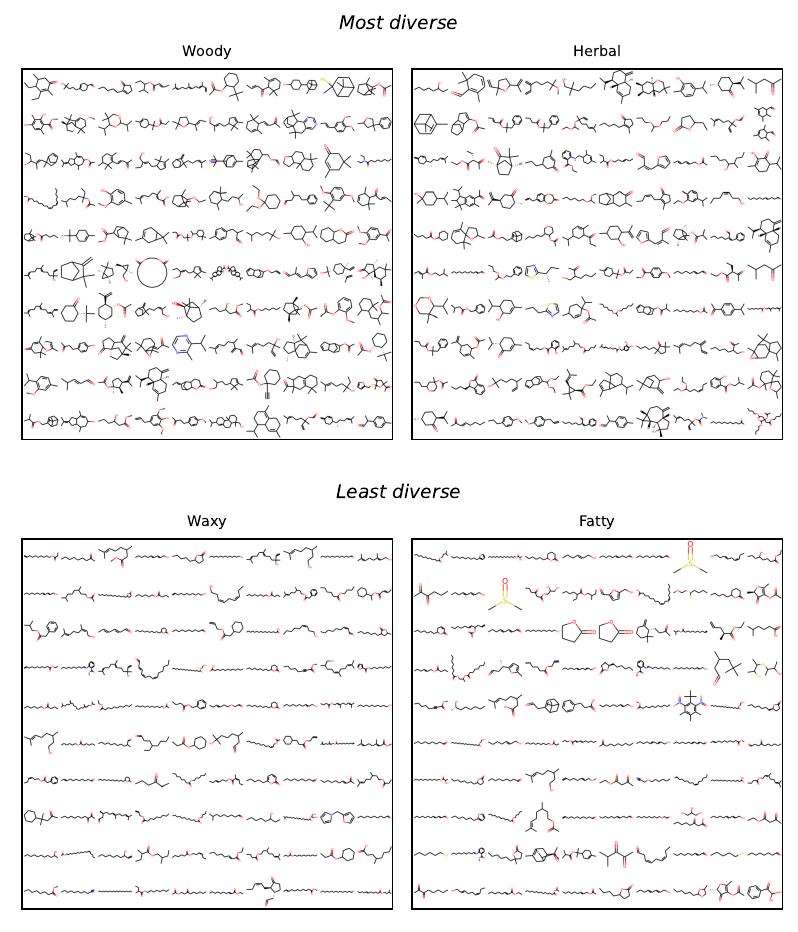}
\captionof{figure}{
\label{fig:molecule_categories}
The scent categories in Goodscents dataset with the highest (top) and lowest (bottom) \gls{VS}, using the molecular fingerprint similarity.
We show $100$ examples from each category, in decreasing order of average similarity, with the image at the top left having the highest average similarity scores.
}
\end{minipage}
\end{figure}

\begin{minipage}{\linewidth}
     \centering
     \begin{minipage}{0.5\linewidth}
      \includegraphics[width=\linewidth]{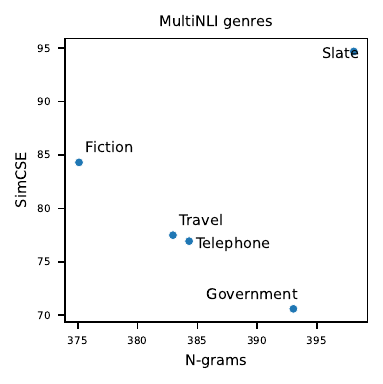}
      \label{subfig:mnli_categories}
     \end{minipage}\begin{minipage}{0.5\linewidth}
      \includegraphics[width=\linewidth]{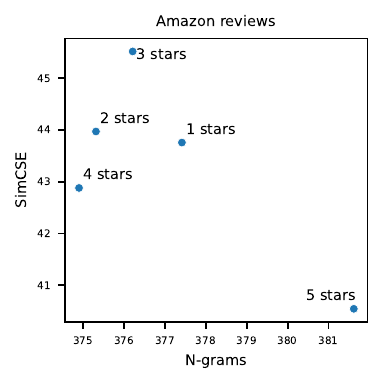}
      \label{subfig:amazon_categories}
     \end{minipage}
\captionof{figure}{
\label{fig:text_categories}
The Vendi Scores of samples containing 500 MultiNLI sentences with different genres (left) or Amazon reviews with different star ratings (right), defining similarity using either n-gram overlap or SimCSE~\citep{gao2021simcse}.
}
\end{minipage}

\paragraph{Text}
In Figure~\ref{fig:text_categories}, we evaluate the diversity scores of samples sentences with different genres, from the MultiNLI dataset~\citep{williams2018broad}, and Amazon product reviews with different star ratings~\citep{keung2020multilingual}, using either the n-gram overlap similarity or SimCSE~\citep{gao2021simcse}.
SimCSE is a Transformer-based sentence encoder that achieves state-of-the-art scores on semantic similarity benchmarks.
The model we use initialized from the uncased BERT-base model~\citep{devlin-etal-2019-bert} and trained with a contrastive learning objective to assign high similarity scores to pairs of MultiNLI sentences that have a logical entailment relationship.

In MultiNLI, both models assign the highest score to Slate, which consists of sentences from articles published on slate.com.
SimCSE assigns a higher score to the ``Fiction'' category, possibly because it is less sensitive to common n-grams (e.g. ``he said''), that appear in many sentences in this genre and contribute to the low N-gram diversity score.
In the Amazon review dataset, the 5-star reviews have the highest N-gram diversity but the lowest SimCSE diversity, perhaps because SimCSE assigns high similarity scores to sentences that have the same strong sentiment.
SimCSE assigns the highest diversity score to 3-star reviews, which can vary in sentiment.

\paragraph{Images}
Following the setting in~\ref{sec:diagnosing_datasets}, we evaluate two additional dataset, Fashion \acrshort{MNIST}~\citep{Xiao2017} and \acrshort{CelebA}~\citep{Liu2015}. 
We use the same similarity scores as in~\ref{sec:diagnosing_datasets}.
Images in CelebA are associated with 40-dimensional binary attribute vectors.
We use these attributes as an additional similarity score, defining the attribute similarity as the cosine similarity between attribute vectors.
These illustrations highlight the importance of the choice of similarity function in defining a diversity metric.

\begin{figure}[h]
     \centering
     \begin{subfigure}[c]{\linewidth}
      \includegraphics[width=\linewidth]{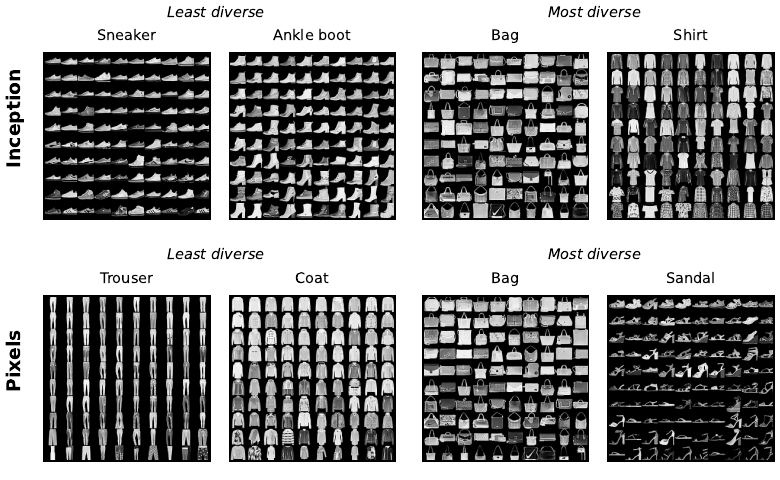}
      \label{subfig:fashion_mnist_categories}
     \end{subfigure}
\caption{
\label{fig:fashion_mnist_categories}
The categories in Fashion MNIST with the lowest (left) and highest (right) Vendi Scores, defining similarity as the cosine similarity between either Inception embeddings (top) or pixel vectors (bottom).
We show 100 examples from each category, in decreasing order of average similarity, with the image at the top left having the highest average similarity scores according to the corresponding kernel.
}
\end{figure}

\begin{figure}[!hbpt]
     \centering
      \includegraphics[width=\linewidth]{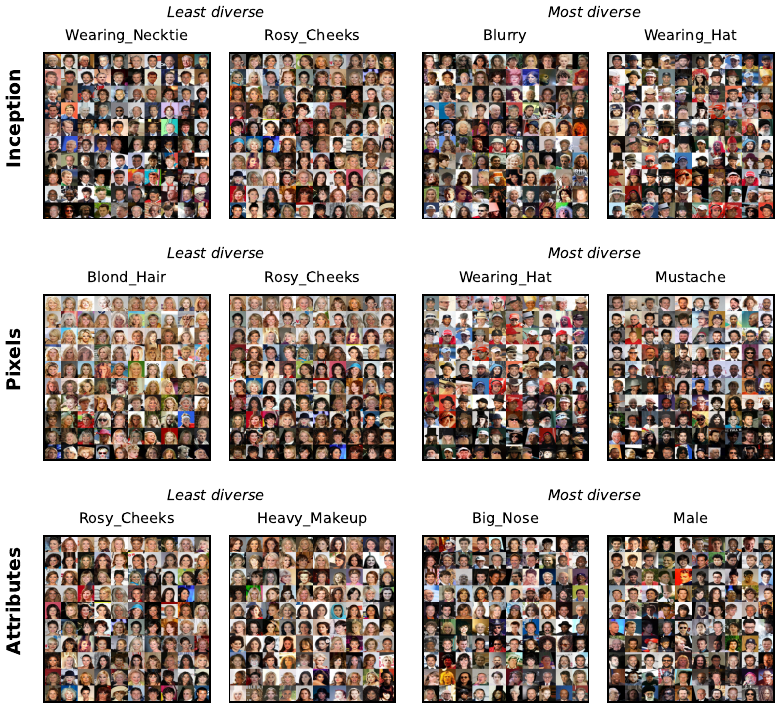}
\caption{
\label{fig:celeba_categories}
The attributes in \acrshort{CelebA} with the lowest (left) and highest (right) \acrshort{VS}, defining similarity as the cosine similarity 
between either Inception embeddings (top), pixel vectors (middle), or binary attribute vectors (bottom).
We show $100$ examples from each category, in decreasing order of average similarity, with the image at the top left having the highest average similarity scores according to the corresponding kernel.
These examples illustrate the importance of the choice of similarity function for defining the notion of diversity that is relevant for a given application. However, almost all choices of similarity functions show that the \acrshort{CelebA} dataset is more diverse for men than for women. 
}
\end{figure}

\end{document}